\ifdefined\pdfminorversion\pdfminorversion=7\fi
\documentclass{article}
\usepackage{latexml}
\iflatexml
  \usepackage{natbib}
  \setcitestyle{authoryear,round,semicolon,aysep={,},yysep={;}}
\else
  \usepackage{iclr2027_conference}
  \iclrfinalcopy
  \newcommand{\arxivpreprint}{}
\fi

\usepackage{amsmath,amssymb,amsfonts,amsthm}
\iflatexml\else
  \usepackage[T1]{fontenc}
  \usepackage{newtxtext,newtxmath}
\fi
\usepackage{booktabs}
\usepackage{array}
\usepackage{graphicx}
\iflatexml\else
  \usepackage{capt-of}
  \usepackage{needspace}
\fi
\usepackage{float}
\iflatexml\else\usepackage{microtype}\fi
\usepackage{enumitem}
\usepackage{tabularx}
\usepackage{longtable}
\usepackage{xcolor}
\iflatexml\else
  \usepackage{tcolorbox}
  \usepackage{fontawesome5}
\fi
\usepackage{url}
\usepackage{xurl}
\usepackage{hyperref}

\hypersetup{
  hidelinks,
  breaklinks=true,
  pdftitle={RoboHarness: A Simple Harness Outperforms VLA and World Action Models},
  pdfauthor={Bince Qu, Wei Chen, Bo Zhang}
}

\definecolor{RoboNavy}{HTML}{263C78}
\definecolor{RoboOrange}{HTML}{D56245}
\definecolor{RoboPaleBlue}{HTML}{EEF3FC}
\definecolor{RoboLightBlue}{HTML}{DAE5F7}
\iflatexml
\title{RoboHarness: A Simple Harness Outperforms VLA and World Action Models}
\author{%
  Bince Qu\\Zhejiang University\\
  \href{mailto:cebin598@gmail.com}{\nolinkurl{cebin598@gmail.com}}
  \and Wei Chen\\Zhejiang University\\
  \href{mailto:chenvis@zju.edu.cn}{\nolinkurl{chenvis@zju.edu.cn}}
  \and Bo Zhang\\Zhejiang University\\
  \href{mailto:zhangboknight@gmail.com}{\nolinkurl{zhangboknight@gmail.com}}}
\else
\newtcolorbox{frontmatterbox}{
  colback=RoboPaleBlue, colframe=RoboLightBlue,
  boxrule=0.5pt, arc=8pt, outer arc=8.5pt, boxsep=0pt,
  left=19.5pt, right=19.5pt, top=17pt, bottom=17pt,
  width=\dimexpr\textwidth+40pt\relax,
  enlarge left by=-20pt, enlarge right by=-20pt,
  before skip=0pt, after skip=10pt,
  fontupper=\normalfont\fontsize{10}{12.3}\selectfont,
  before upper={\setlength{\parskip}{0pt}}
}
\newcommand{\projectpageicon}{%
  \raisebox{-1.2pt}{\includegraphics[height=9.5pt]{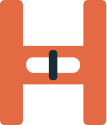}}}
\newcommand{\frontlink}[2]{%
  \makebox[11pt][c]{#1}\hspace{4pt}%
  \href{#2}{\textcolor{RoboNavy}{\nolinkurl{#2}}}}

\title{%
  \mbox{\raisebox{-5.95pt}{\includegraphics[height=21.25pt]{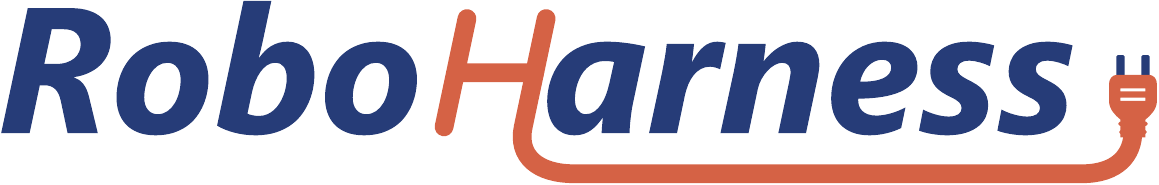}}\hspace{2pt}:}
  A Simple Harness\\
  Outperforms VLA and World Action Models}

\author{Bince Qu \and Wei Chen \and Bo Zhang}
\newcommand{\frontauthors}{%
  {\fontsize{11}{13}\selectfont\bfseries
    \href{https://openreview.net/profile?id=~Bince_Qu1}{Bince Qu},\quad
    \href{https://openreview.net/profile?id=~Wei_Chen34}{Wei Chen},\quad
    \href{https://openreview.net/profile?id=~Bo_Zhang9}{Bo Zhang}\par}
  \vspace{4pt}
  {\fontsize{10.5}{13}\selectfont Zhejiang University\par}
  \vspace{4pt}
  {\fontsize{8.5}{11}\selectfont
    \href{mailto:cebin598@gmail.com}{\nolinkurl{cebin598@gmail.com}},\enspace
    \href{mailto:chenvis@zju.edu.cn}{\nolinkurl{chenvis@zju.edu.cn}},\enspace
    \href{mailto:zhangboknight@gmail.com}{\nolinkurl{zhangboknight@gmail.com}}\par}
  \vspace{7pt}
  {\fontsize{9}{13}\selectfont
    \frontlink{\faGithub}{https://github.com/BinceQu/RoboHarness}\par
    \vspace{2pt}
    \frontlink{\projectpageicon}{https://bincequ.github.io/RoboHarness}\par}
}

\makeatletter
\renewcommand{\maketitle}{%
  {\raggedright\fontsize{19.5}{23}\selectfont\bfseries\@title\par}
  \vspace{9pt}
  {\raggedright\normalfont\frontauthors\par}
}
\makeatother
\renewenvironment{abstract}{%
  \par\vspace{10pt}\normalfont\fontsize{10}{12.3}\selectfont\ignorespaces
}{\par}
\fancypagestyle{roboharnessfirst}{%
  \fancyhf{}\renewcommand{\headrulewidth}{0pt}\fancyfoot[C]{\thepage}}
\fi

\theoremstyle{definition}

\theoremstyle{plain}
\newtheorem{proposition}{Proposition}

\begin{document}

\iflatexml
  \maketitle
  \noindent\href{https://github.com/BinceQu/RoboHarness}{GitHub: \nolinkurl{https://github.com/BinceQu/RoboHarness}}\par
  \noindent\href{https://bincequ.github.io/RoboHarness}{Project page: \nolinkurl{https://bincequ.github.io/RoboHarness}}\par
\else
  \thispagestyle{roboharnessfirst}
  \vspace*{-28pt}
  \begin{frontmatterbox}
  \maketitle
\fi

\begin{abstract}
The rise of large language models (LLMs) has opened a path toward generalist embodied AI, raising the question of how LLMs can act in the physical world.  One straightforward approach is training the model to generate actions: collect robot-action data and video demonstration, refine LLM architecture with an action head, and train a vision-language-action (VLA) model at scale.
However, we argue that coding agents have the potential to realize generalist embodied manipulation. All they need is a simple yet effective robot harness. We present \textbf{RoboHarness}: a robotic harness that gives LLM agents a visual-geometric control panel so that they can directly understand and invoke embodied tasks. With RoboHarness, Qwen3.8-Flash-Next \citep{qwen38flash} outperforms state-of-the-art VLA models on BEHAVIOR Challenge 2025 tasks \citep{li2024behavior}. The implementation is open source.
\end{abstract}
\iflatexml\else\end{frontmatterbox}\fi

\begingroup
\iflatexml
  \begin{figure}[htbp]
\else
  \setlength{\intextsep}{8pt}
  \begin{figure}[H]
\fi
    \centering
    \iflatexml\else\small\fi
    \iflatexml
      \includegraphics[width=\linewidth]{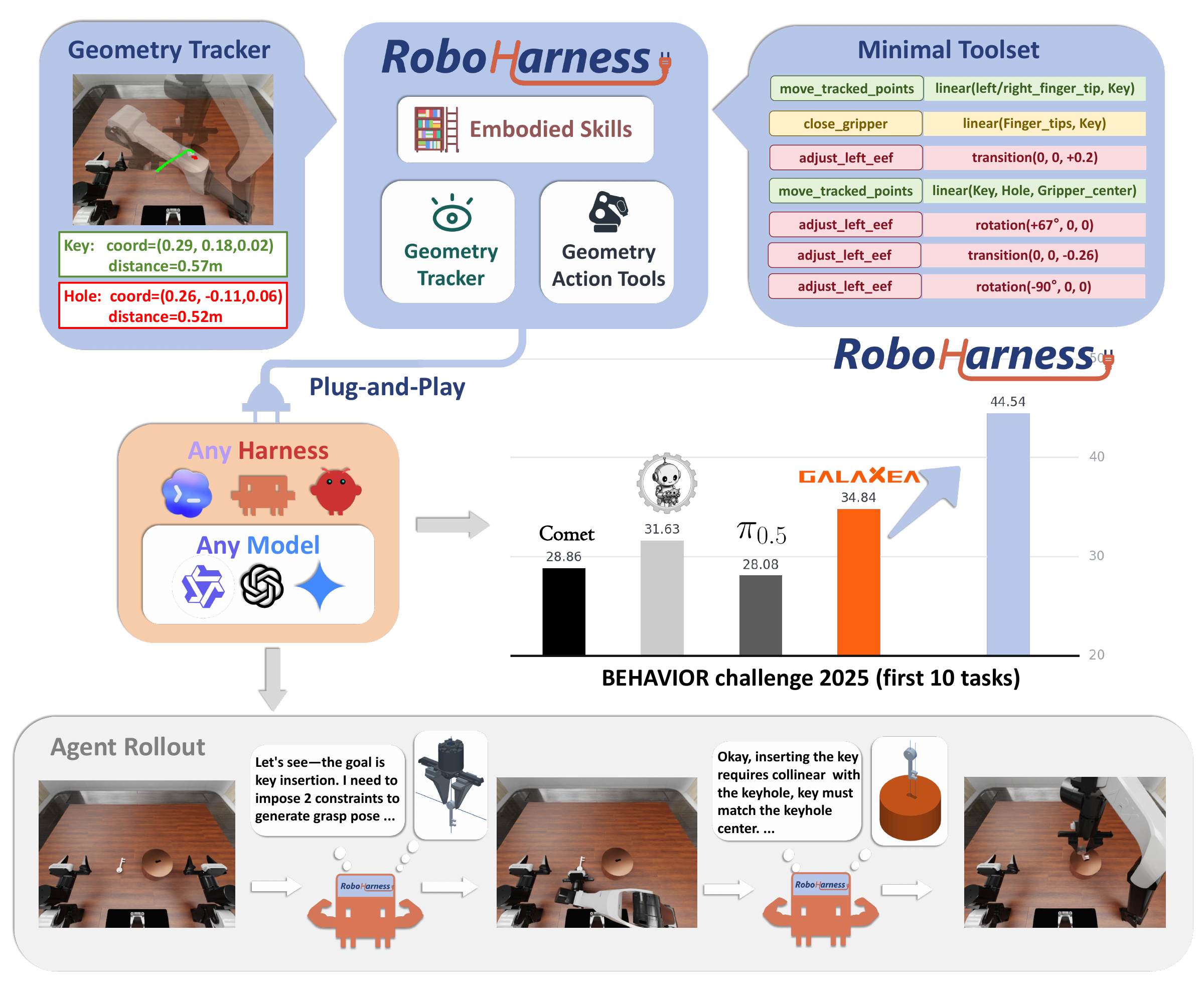}
    \else
      \includegraphics[width=0.92\linewidth]{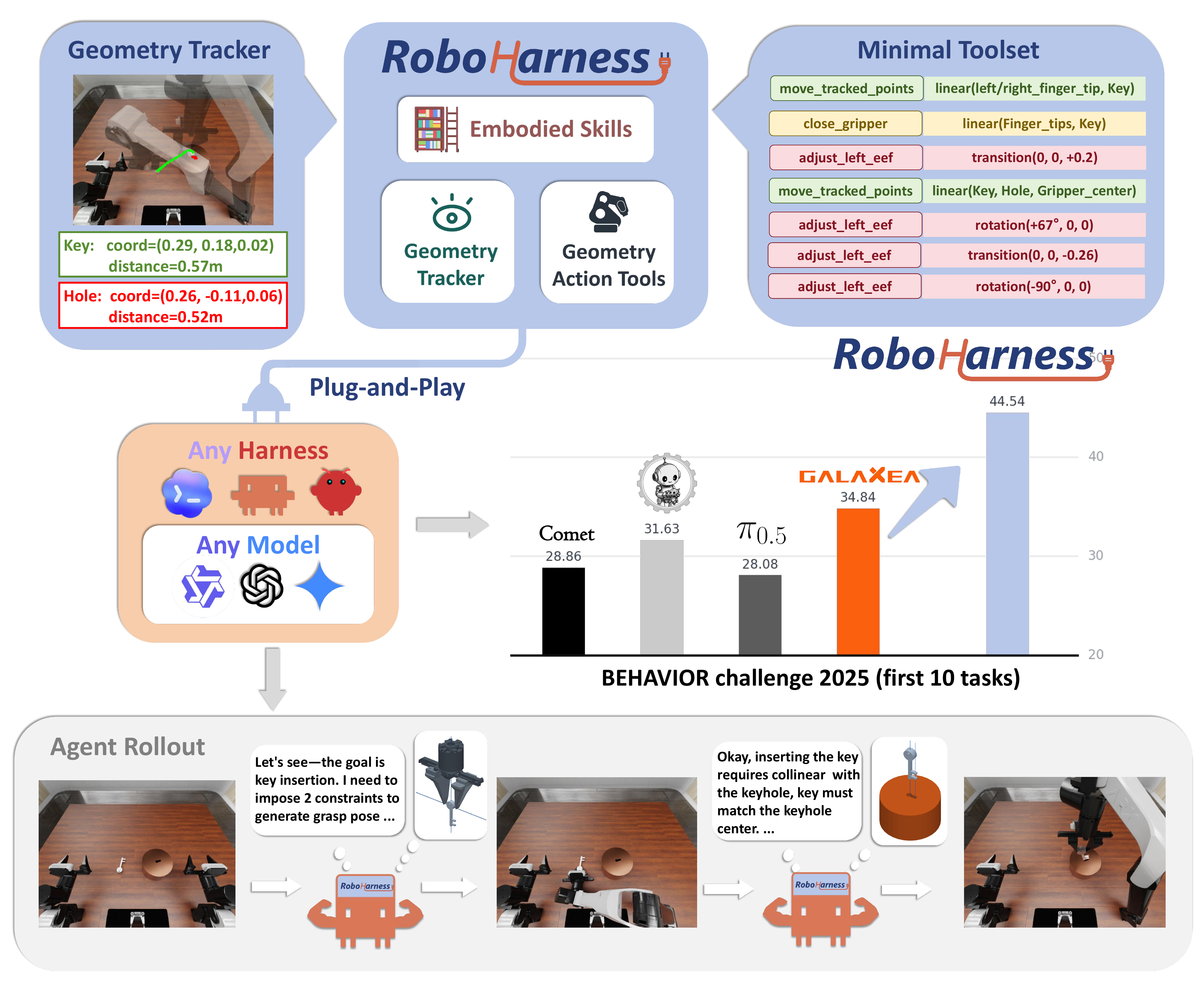}
    \fi
    \caption{RoboHarness plugs into coding-agent harnesses such as Claude Code, tracks keypoints as geometric constraints, and verifies execution in a closed loop. On the first 10 BEHAVIOR Challenge 2025 tasks, Qwen3.8-Flash-Next attains a higher mean $q$-score than the state-of-the-art VLA models shown.}
    \label{fig:overview}
\end{figure}
\endgroup
\iflatexml\else\clearpage\fi

\section{Introduction}
\label{sec:introduction}

The emergence of large language models (LLMs) opened a path towards generalist robot manipulation. The community has explored many ways to make use of LLM for planning, visual understanding, physical reasoning, and spatial reasoning to achieve generalizable robot manipulation. Yet an LLM's world knowledge lives in a textual space. A bridge between the LLM and the physical world is therefore required:

One straightforward approach is training the model to generate actions. VLA models try to make the model output end-effector actions directly. Actions may be encoded into the vocabulary for autoregressive generation, as in RT-2, OpenVLA, and FAST \citep{brohan2023rt2,kim2024openvla,pertsch2025fast}, or grouped into flow-matching chunks, which are later produced by an action head appended to the LLM \citep{black2024pi0,nvidia2025gr00t}. This representation can express arbitrary robot actions, but it requires substantial robot-action data and additional post-training. Such data are difficult to collect and scale to the diversity of real-world objects, scenes, and tasks. This difficulty has motivated world-action models (WAMs), which seek to improve generalization by learning action-conditioned models of how the world changes \citep{zhao2026fasterwam,wang2026openwam,lingbotva2026}.

We take a different view. An LLM already has strong capabilities for task decomposition, planning, spatial reasoning, and tool use. The main challenge is therefore not to train the LLM to generate a larger set of actions, but to provide a simple and sufficiently complete interface that connects these capabilities to the physical world.

Existing work provides evidence for this view: Packaging embodied subtasks, such as picking, placing, and navigation, as callable functions allows an LLM to compose them into longer procedures \citep{liang2023code}. ASPIRE shows that an LLM agent can refine such programs and accumulate the repairs in a skill library \citep{lu2026aspire}. However, the available functions are not complete: their coverage is determined by predefined tasks, objects, and policies. Every new object or task may require another function, leaving the LLM confined to a predefined action space. This suggests that the main limitation lies in the action interface rather than in the LLM's ability.

In this paper, we propose \textbf{RoboHarness}. Without training models, RoboHarness combines visual clicks, keypoint tracking, and geometric constraints to provide an embodied control panel for an LLM agent. Concretely, our contributions are:
\begin{enumerate}[label={}, leftmargin=1.5em, itemsep=0.6em, listparindent=0pt]
    \item \textbf{An expressive, LLM-friendly robot harness} that provides an embodied control panel for an LLM agent.
    \item \textbf{A new paradigm for embodied AI.} We show that a generalist LLM, equipped with the right tool harness can outperform VLA and World Action Models
\end{enumerate}

With RoboHarness, the open-source model Qwen3.8-Flash-Next can directly solve long-horizon household tasks in BEHAVIOR-1K \citep{li2024behavior} without any finetune, outperforming state-of-the-art models. It also substantially outperforms VLA and WAM baselines on fully out-of-distribution generalization tests.

\section{Related Work}
\label{sec:related}

Prior work connects pretrained language and vision-language models to manipulation through learned action generation, executable programs, or spatial representations. These approaches differ in where they place the boundary between model reasoning and robot control.

\subsection{Vision-Language-Action Models}
\label{sec:related-vla}

\paragraph{Autoregressive action tokens.}
RT-2 and OpenVLA adapt pretrained VLMs to predict discretized robot actions through next-token prediction \citep{brohan2023rt2,kim2024openvla}. FAST improves this approach by compressing action chunks with a frequency-domain tokenizer, making autoregressive learning more effective for high-frequency control \citep{pertsch2025fast}. Galaxea G0.5 further integrates reasoning and action tokens into one autoregressive stream, combining task decomposition, visual grounding, and motor commands within a shared model \citep{liu2026g05}.

\paragraph{Continuous action experts.}
Another family couples a pretrained VLM with a specialized module for generating continuous action sequences. Representative examples include the flow-matching action expert in $\pi_0$ and the diffusion transformer in GR00T N1 \citep{black2024pi0,nvidia2025gr00t}. More recent models, such as $\pi_{0.7}$, condition action generation on richer task context, including subtask instructions, subgoal images, and episode metadata, to improve steerability \citep{physicalintelligence2026pi07}. These approaches learn a broad motor action space, but require robot-action supervision and additional training. Preserving pretrained knowledge is also a training-design concern: knowledge insulation mitigates interference by blocking action-expert gradients from entering the VLM backbone \citep{driess2025knowledge}. RoboHarness connects an unchanged LLM to robot control through a geometric tool interface, avoiding this additional model-training stage.

\subsection{LLMs as Robot Planners and Programmers}
\label{sec:related-agents}

\paragraph{Skill and module orchestration.}
SayCan grounds language-level skill selection in learned affordance values, while Inner Monologue incorporates environmental feedback into replanning \citep{ahn2022saycan,huang2022innermonologue}.  HarnessVLA combines a frozen VLA for contact-rich manipulation with analytic primitives, using execution memory to guide retries and recovery \citep{zhang2026harnessvla}. 

\paragraph{Program synthesis and skill discovery.}
Code as Policies extends composition to generated control programs, including feedback loops and waypoint-level commands \citep{liang2023code}. CaP-X systematically varies the abstraction level of robot APIs and finds that performance degrades as human-designed abstractions are removed; execution feedback and skill discovery help agents work with lower-level interfaces \citep{fu2026capx}. ASPIRE compounds program repairs into a growing skill library \citep{lu2026aspire}.

\subsection{Spatial Representations for Manipulation}
\label{sec:related-spatial}

\paragraph{Visual points and motion proposals.}
RoboPoint learns to predict image-space affordance points from synthetic supervision \citep{yuan2024robopoint}. PIVOT iteratively selects among visual action proposals with a frozen VLM, and MOKA selects marked keypoints and waypoints \citep{nasiriany2024pivot,fang2024moka}. HAMSTER predicts a two-dimensional end-effector path and passes it to a separate low-level policy \citep{li2025hamster}. VoxPoser grounds language in composable three-dimensional value maps and uses model-based planning to generate six-degree-of-freedom end-effector trajectories \citep{huang2023voxposer}. Constraint-based approaches express the required geometry more explicitly. CoPa grounds object parts in the observed scene and solves geometric constraints over those parts \citep{huang2024copa}. OmniManip represents interaction points and directions in an object's canonical space and tracks the resulting six-dimensional poses \citep{pan2025omnimanip}. ReKep synthesizes relational constraints over three-dimensional keypoints as Python functions and optimizes subgoals and paths in a closed perception--action loop \citep{huang2024rekep}. These works establish that grounded geometric relations can support orientation-aware, multi-stage manipulation without task-specific VLM training.

\section{Method}
\label{sec:method}
Lessons from coding agents show that: a minimal set of simple, orthogonal primitive tools outperforms a collection of predefined features and heavy scaffolding. Inspired by this principle, we argue that an LLM's potential on embodied tasks is best unlocked by giving it the minimal primitive action tools rather than layers of engineered features.

We start from studying spatial relations between a gripper and objects it interacts with, therefore constructing the most elementary primitive action tools. We further prove their completeness in Section~\ref{sec:proof-completeness}: (i)~ \textbf{any contact pose} between a gripper and an object can be expressed by geometric constraints, and (ii)~once the gripper holds an object, \textbf{any spatial relationship} between the in-hand object and a target object can be expressed by geometric constraints.

With the primitive action tools in place, the remaining challenge is spatial understanding: although VLMs cannot reliably reason about 3D spatial relations, their grounding training enables them to localize specific points on objects with reasonable accuracy. We therefore use spatial grounding as the bridge between vision-language models and robots. The LLM marks points of interest on a 2D image and continuously receives their positions via optical-flow tracking and depth back-projection; with this information, it composes accurate primitive actions to perform complex manipulation.

Following this design, we build a simple but effective Robot Harness, which we will release as open source to the community.
\subsection{Geometry State for Spatial Grounding}
\label{sec:geometry-state}

\iflatexml\else
\noindent
\begin{minipage}[t]{0.64\linewidth}
\vspace{0pt}
\raggedright
\fi
To make primitive action tools more usable to the LLM, we design a persistent geometry-state tracking mechanism that continuously returns the coordinates of objects of interest to the LLM, enhancing its accurate spacial perception capabilities.

\subsubsection{Robot coordination definition}
\label{sec:body-frame}
We first define a unified camera-frame coordination for diverse robot configurations, making it straightforward for the LLM to read spatial relations: 

\iflatexml\else\vspace{1em}\fi
\noindent
{\centering
\begin{tabular}{@{}ll@{}}
\toprule
Axis & Positive direction \\
\midrule
$x$ & forward from camera view \\
$y$ & left from camera view \\
$z$ & upward from camera view \\
\bottomrule
\end{tabular}
\par}
\iflatexml
\begin{figure}[htbp]
\centering
\includegraphics[width=0.8\linewidth]{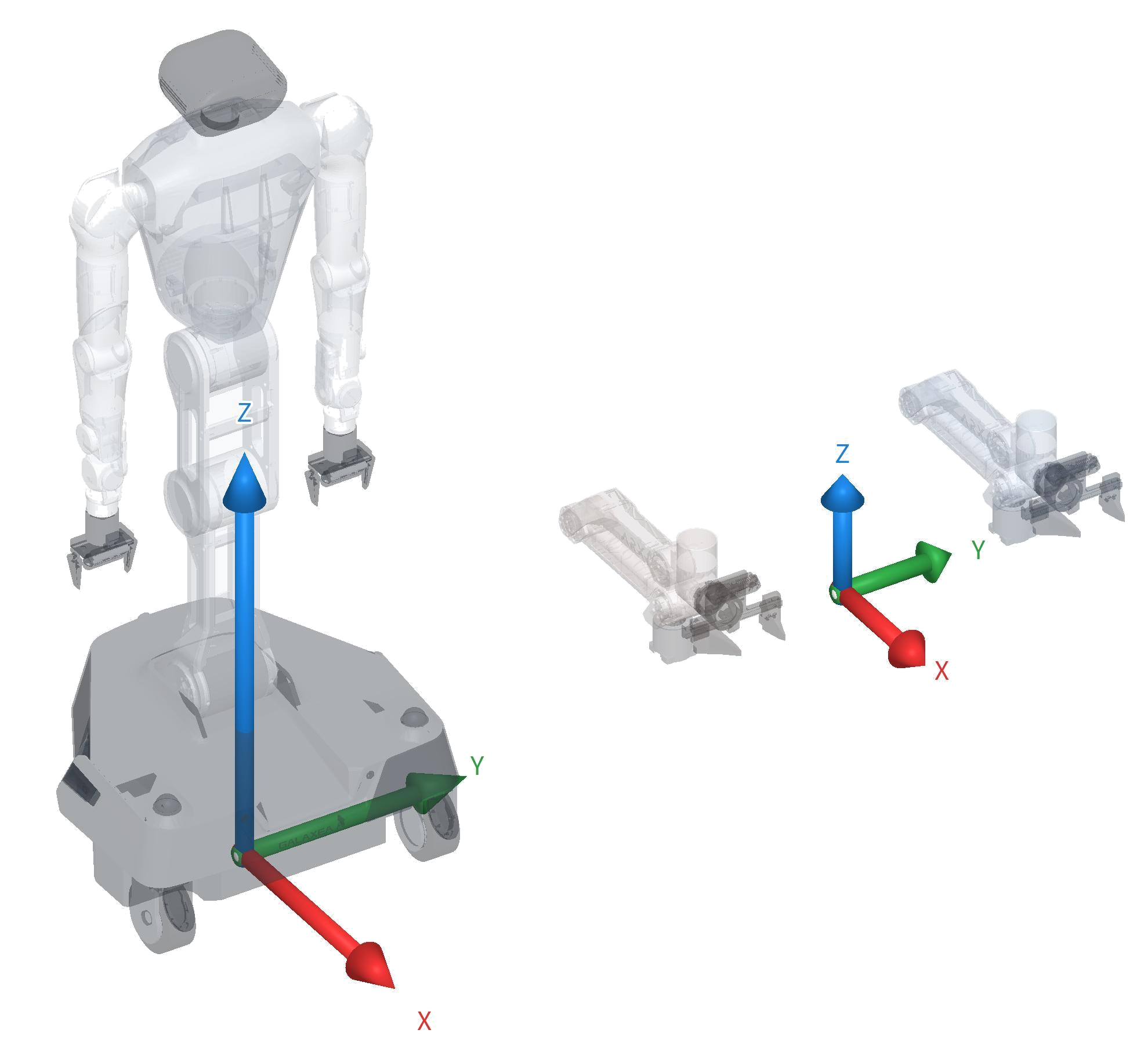}
\caption{Visualization of the unified coordinate frame on two embodiments: BEHAVIOR R1~Pro \citep{li2024behavior} (left) and RoboDojo ARX~X5 \citep{chen2026robodojo} (right).}
\label{fig:robot-frames}
\end{figure}
\else
\end{minipage}\hfill
\begin{minipage}[t]{0.32\linewidth}
\vspace{0pt}
\centering
\includegraphics[width=\linewidth]{figures/r1pro_dual_x5_xyz_pair.png}
{\small\captionof{figure}{Visualization of the unified coordinate frame on two embodiments: BEHAVIOR R1~Pro \citep{li2024behavior} (left) and RoboDojo ARX~X5 \citep{chen2026robodojo} (right).}}
\label{fig:robot-frames}
\end{minipage}
\fi

\subsubsection{Track keypoints}
\label{sec:track-keypoints}

\iflatexml\else
\noindent\begin{minipage}[t]{0.45\linewidth}
\raggedright\arraybackslash
\fi
With a unified coordinate frame in place, we design the \texttt{track\_object} tool for LLM to retrieve coordinates of points in which it is interested using no simulation priors the procedure  and proceeds as follows:
\begin{enumerate}
\item The LLM selects one or more points of interest that are visible in the head camera.
\item Harness continuously tracks each marked point across successive head-camera frames.
\item The latest 2D location is projected onto the depth view to recover the metric distance.
\item The pixel coordinate and depth are then back-projected into the robot camera frame.
\end{enumerate}

\iflatexml
\begin{figure}[htbp]
\centering
\includegraphics[width=\linewidth]{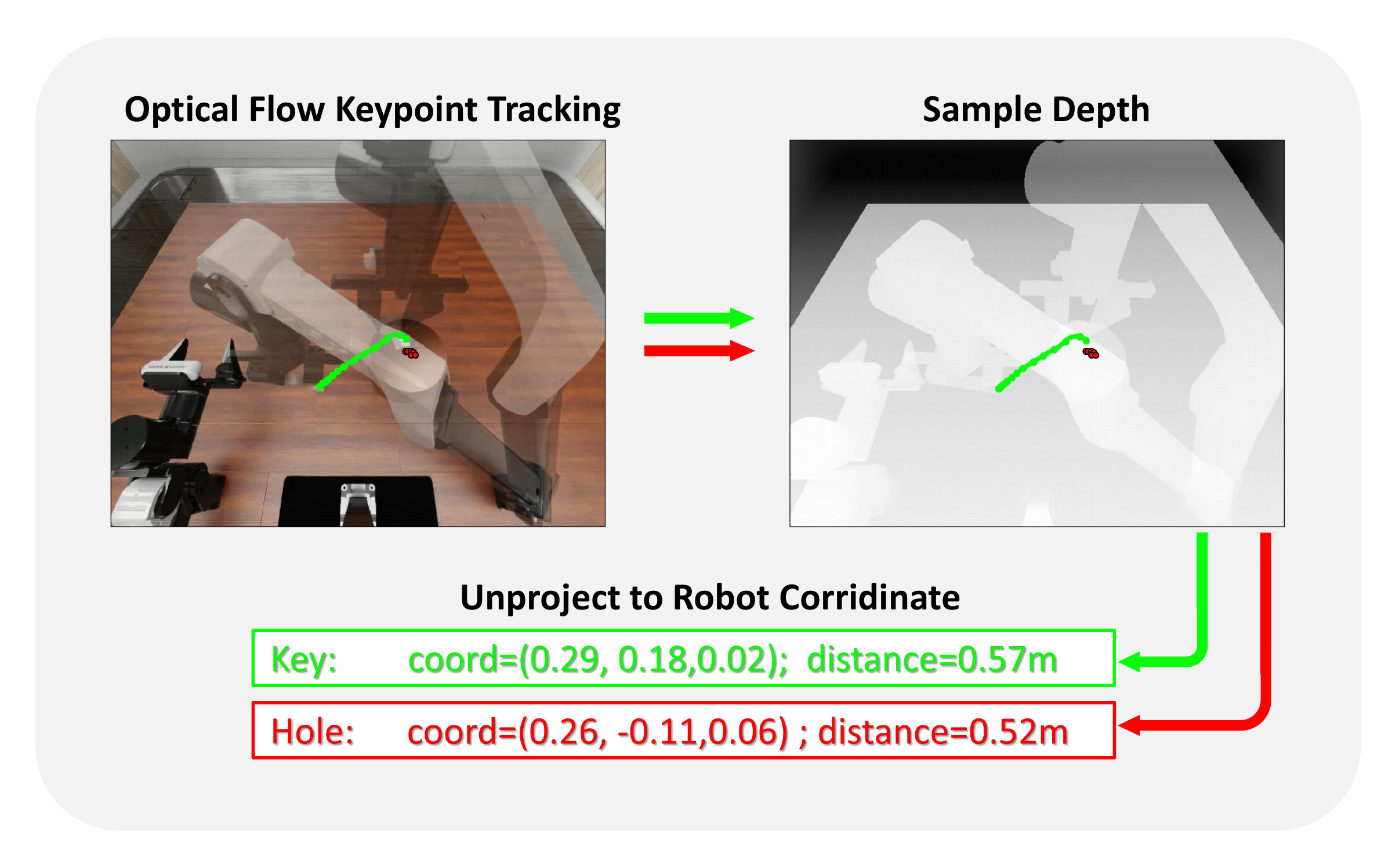}
\caption{Pipeline of the \texttt{track\_object} tool: optical-flow keypoint tracking followed by depth back-projection into the robot coordinate frame.}
\label{fig:track-tool}
\end{figure}
\else
\end{minipage}\hfill
\begin{minipage}[t]{0.5\linewidth}
\vspace{0pt}
\centering
\includegraphics[width=\linewidth]{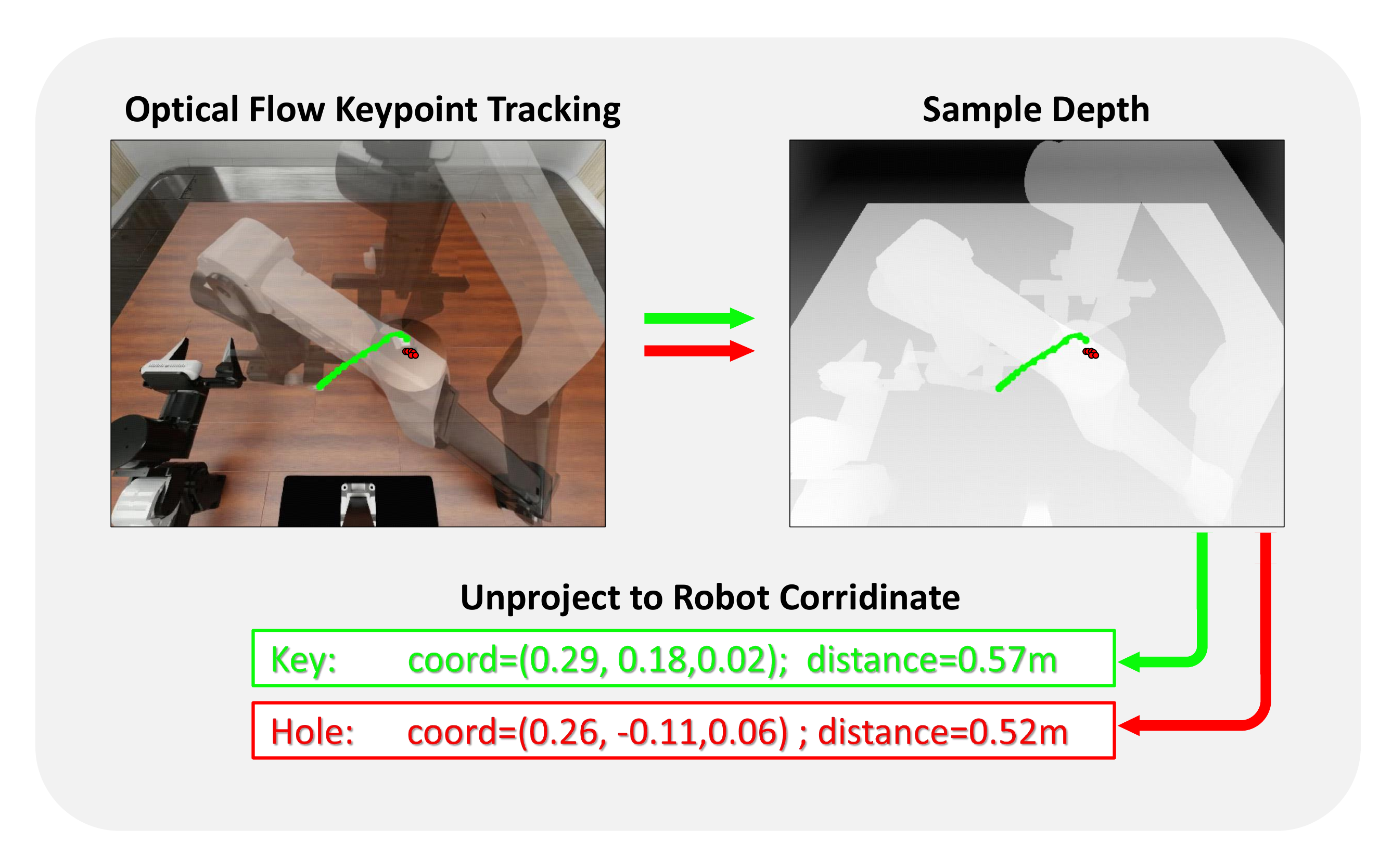}
{\small\captionof{figure}{Pipeline of the \texttt{track\_object} tool: optical-flow keypoint tracking followed by depth back-projection into the robot coordinate frame.}
\label{fig:track-tool}}
\end{minipage}
\fi

\subsection{Action Toolset}
\label{sec:action-toolset}
Our goal is to design a minimal action toolset.

Therefore, we start from object-interaction relations and build the toolset bottom-up.

A manipulation task can be viewed as a two-stage process: first the gripper establishes a spatial relationship with an object, and then an end-effector action drives this object to satisfy a geometric relationship with another object. For instance, in an insert-key task, the gripper first grasps the key, then moves it until it satisfies the insertion constraint with the lock. Consequently, our action toolset only needs to capture these two stages.

We implement the first stage with keypoint-and-constraint tools: the LLM selects points on the image and specifies the geometric constraints they must satisfy; the harness automatically solves and executes the corresponding actions. To support model fine-tuning, we additionally provide adjustment tools that directly command the robot's low-level degrees of freedom.

In the next section, we will prove that, for any object, any grasp pose can be expressed by two keypoint constraints when the object has not yet been grasped; and when it has already been grasped, assuming the gripper forms a rigid connection with the object, the desired spatial relation between this held object and any other object in the scene can be expressed by two sets of keypoint constraints. Therefore, our action toolset is both minimal and complete.

\subsubsection{Keypoint-constraint tools for manipulation}
\label{sec:keypoint-tools}

The model supplies a set of input points together with their spatial relations, and the harness automatically solves and executes the corresponding action. The model must distinguish between on-hand and off-hand points: on-hand points lie on the gripper or on a held object and their positions can be modified, whereas off-hand points are fixed in the scene. The harness drives the on-hand points so that they satisfy the spatial constraints specified by the model through the following pipeline:
\begin{enumerate}
    \item Sample action candidates and select those satisfying the constraints.
    \item Discard candidates whose inverse kinematics are infeasible or unreachable.
    \item Plan a motion trajectory from the current pose to the target pose.
    \item Execute the trajectory.
\end{enumerate}

In principle, the model can express any constraint through coordinate relations among the input points. However, expressions such as collinearity and coplanarity are both verbose to write and frequently required, so we encapsulate the most common patterns as quick requests; their specific modes are listed in Table~\ref{tab:quick-constraints}.
\begin{table}[H]
\caption{Modes for keypoint-constraints tool}
\label{tab:quick-constraints}
\centering
\small
\setlength{\tabcolsep}{4pt}
\renewcommand{\arraystretch}{1.25}
\begin{tabularx}{\linewidth}{@{}>{\raggedright\arraybackslash}p{2.2cm}>{\centering\arraybackslash}p{2.4cm}X@{}}
\toprule
\textbf{Mode} & \textbf{Input points} & \textbf{Description} \\
\midrule
coordinate & \textit{arbitrary} & move points to satisfy the given coordinate constraints \\
\bottomrule
\end{tabularx}

\vspace{0.5em}

\begin{tabularx}{\linewidth}{@{}>{\raggedright\arraybackslash}p{2.8cm}>{\centering\arraybackslash}p{1.6cm}>{\centering\arraybackslash}p{1.6cm}X@{}}
\toprule
\textbf{Quick-requests} & \textbf{On-hand} & \textbf{Off-hand} & \textbf{Description} \\
\midrule
touch & $1$ & $1$ & an on-hand point coincides with an off-hand point \\
linear & $2$ & $1\ \text{or}\ 2$ & two on-hand points collinear with off-hand points \\
line-vertical-plane & $2$ & $3$ & an on-hand line is orthogonal to an off-hand plane \\
plane-parallel & $3$ & $3$ & on-hand plane parallel to off-hand plane \\
line-vertical-ground & $2$ & $0$ & on-hand line vertical to the ground \\
face to & $3$ & $0$ & face the target surface \\
move to reach & $0$ & $1$ & move chassis to make target reachable \\
grasp & $0$ & $1$ & move chassis to make target reachable \\
\bottomrule
\end{tabularx}

\end{table}

\subsubsection{Adjustment tools}
\label{sec:adjustment-tools}
To refine robot action or to improve robustness when the constraints fail (for example, when the object slips out of the gripper), we also expose direct low-level robot control interfaces. These interfaces share the same coordinate frame defined in Section~\ref{sec:body-frame}, so that the LLM can interpret and use them consistently.
\begin{table}[H]
\caption{Adjustment tools. Displacements are incremental; chassis \texttt{spin} is a yaw increment.}
\label{tab:adjustment-tools}
\centering
\small
\setlength{\tabcolsep}{4pt}
\begin{tabularx}{\linewidth}{@{}>{\raggedright\arraybackslash}p{3.15cm}>{\centering\arraybackslash}p{3.35cm}X@{}}
\toprule
Name & Parameters & Meaning \\
\midrule
adjust left eef pose & $(\Delta x,\Delta y,\Delta z,r, p, y)$ & increment the left end-effector pose or rotation \\
adjust right eef pose & $(\Delta x,\Delta y,\Delta z,r, p, y)$ & increment the right end-effector pose or rotation \\
adjust chassis & $(\mathrm{fwd},\,\mathrm{side},\,\mathrm{spin})$ & move the chassis forward, sideways or spin \\
adjust height & $\Delta h$ & raise or lower the body \\
adjust pitch & $\Delta\theta$ & change the body pitch \\
\bottomrule
\end{tabularx}
\end{table}

\section{Proof: Completeness of the Keypoint-Constraint Representation}
\label{sec:proof-completeness}

The three points on the robot gripper, namely the left fingertip, right fingertip, and gripper center, are intrinsic points that remain tracked throughout execution (Figure~\ref{fig:gripper-triangle}). The following results show that constraining their geometric relationships can represent arbitrary motion trajectories and interaction poses. The elementary correspondence between a contact pose and a keypoint triple is given in Section~\ref{sec:grasp-pose}.

\subsection{Any Grasp Pose Can Be Expressed by Keypoint Constraints}
\label{sec:grasp-pose}

\iflatexml\else
\noindent
\begin{minipage}[t]{0.56\linewidth}
\vspace{0pt}
\raggedright
\fi
\paragraph{Definition: }The gripper carries three predefined points $\{F_L,F_R,G\}$: the left fingertip, the right fingertip, and the slider center. A contact pose is $(f_l,f_r,\theta)$ with $f_l,f_r\in\partial A$. We prove that $(f_l,f_r,\theta)$ is in analytic bijection with three surface points $(a_1,a_2,a_3)$ under the constraints $\mathrm{overlap}[a_1,F_L]$, $\mathrm{overlap}[a_2,F_R]$, and $\mathrm{linear}[c,a_3,G]$, where $c=\frac{a_1+a_2}{2}$.
\iflatexml\else\vspace{1.5em}\fi
\begin{proposition} 
  The gripper carries predefined points $\{F_L,F_R,G\}$ and a contact pose is $(f_l,f_r,\theta)$ with $f_l,f_r\in\partial A$. We prove an analytic bijection to three surface points $(a_1,a_2,a_3)$ under $\mathrm{overlap}[a_1,F_L]$, $\mathrm{overlap}[a_2,F_R]$, and $\mathrm{linear}[c,a_3,G]$, where $c=\tfrac{a_1+a_2}{2}$.
\end{proposition}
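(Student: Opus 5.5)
We make the statement precise first and then construct the two maps explicitly. Model the gripper as a rigid body whose pose is determined by the three tracked points $F_L,F_R,G$. These are non-collinear, since $G$ lies off the fingertip axis at fixed distance $h>0$ from the fingertip midpoint $M=\tfrac{F_L+F_R}{2}$, with $GM\perp F_LF_R$. A contact pose $(f_l,f_r,\theta)$ places the fingertips at $f_l,f_r\in\partial A$. The angle $\theta$ is the roll of the gripper about the axis $f_lf_r$. We fix a reference direction $e(f_l,f_r)\perp(f_r-f_l)$ depending analytically on the pair, for example the normalized projection of a fixed world vector. Then $\theta$ measures the angle of $G-M$ from $e$ in the plane orthogonal to $f_r-f_l$. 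We restrict throughout to the open domain where $f_l\neq f_r$, the fingertip separation $|f_r-f_l|$ is compatible with the gripper aperture, and $e$ is well defined. On this domain the statement becomes: the map $\Phi:(f_l,f_r,\theta)\mapsto(a_1,a_2,a_3)$ defined by the three constraints is well defined, injective, has an analytic inverse on its image, and is itself analytic.

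\textbf{Forward map.} The constraints $\mathrm{overlap}[a_1,F_L]$ and $\mathrm{overlap}[a_2,F_R]$ force $a_1=f_l$ and $a_2=f_r$, so $c=M$. Given $\theta$, the point $G=c+h\,(\cos\theta\, e+\sin\theta\, e')$ is determined, where $e'=\hat u\times e$ and $\hat u=(f_r-f_l)/|f_r-f_l|$. The constraint $\mathrm{linear}[c,a_3,G]$ says that $a_3$ lies on the ray from $c$ through $G$. To make $a_3$ a well-defined surface point, we take $a_3$ to be the first intersection of the ray $c+t(G-c)$, $t>0$, with $\partial A$. If $\partial A$ is analytic and the ray meets it transversally, then the implicit function theorem applied to $\phi(c+t\,d)=0$, with $\phi$ an analytic defining function of $\partial A$, gives analytic dependence of $t$, and hence of $a_3$, on $(f_l,f_r,\theta)$. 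We therefore restrict to the generic set where this intersection exists and is transversal.

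\textbf{Inverse map.} Given $(a_1,a_2,a_3)$ with $a_3\neq c$, set $f_l=a_1$ and $f_r=a_2$. The direction $d=(a_3-c)/|a_3-c|$ must be orthogonal to $a_2-a_1$, so the image lies in the codimension-one set where $(a_3-c)\cdot(a_2-a_1)=0$. Any two of the three keypoints cannot fix that orthogonality, so this is a constraint on the image rather than on the domain. Recover $\theta=\operatorname{atan2}(d\cdot e',\,d\cdot e)$, which is analytic locally, in the sense of a circle-valued angle. We then check that the two maps are mutually inverse. In one direction $d$ equals $(G-c)/h$ by construction. In the other direction, rebuilding $G$ from the recovered $\theta$ reproduces a ray through $a_3$, and the first-hit convention returns $a_3$.

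The main obstacle is making $a_3$ unique and analytic, since an arbitrary set $\partial A$ can meet the ray several times, tangentially, or not at all. I would handle this by stating the result as a local analytic diffeomorphism onto its image. This applies on the open set of poses where the ray from $c$ toward $G$ hits $\partial A$ transversally at a first point, and it uses the first-hit convention. Transversality is exactly the nondegeneracy condition $\nabla\phi(a_3)\cdot d\neq0$ that the implicit function theorem requires, and bijectivity then follows from the explicit inverse above.
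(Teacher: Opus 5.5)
Your argument is correct and takes the same route as the paper. The two overlap constraints give $a_1=f_l$ and $a_2=f_r$, the linear constraint puts $a_3$ on the ray $c+k\,(G-c)$ with $k>0$, and $\theta$ is read off from the components of $a_3-c$ in an orthonormal basis of the plane orthogonal to $a_2-a_1$. Your additions supply the uniqueness and analyticity that the paper only asserts: the first-hit and transversality conventions with the implicit function theorem, and $\operatorname{atan2}$ in place of the paper's $\tan\theta$ formula, which alone determines $\theta$ only modulo $\pi$ unless the sign $k>0$ is used.
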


\iflatexml
\begin{figure}[htbp]
\centering
\includegraphics[width=0.7\linewidth]{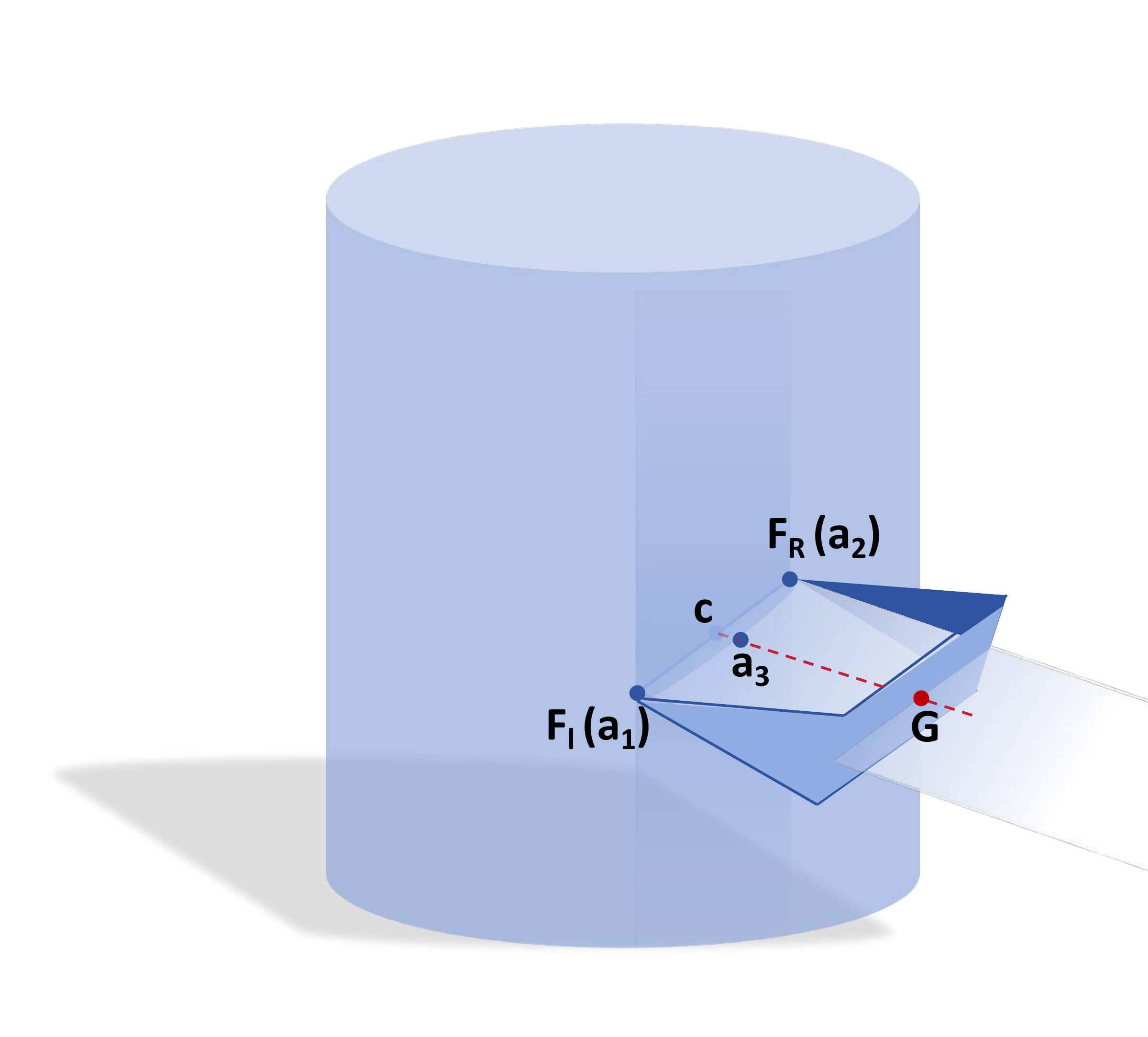}
\caption{Three predefined gripper points $\{F_L, F_R, G\}$ and a contact pose $(f_l, f_r, \theta)$.}
\label{fig:gripper-triangle}
\end{figure}
\else
\end{minipage}\hfill
\begin{minipage}[t]{0.40\linewidth}
\vspace{0pt}
\centering
\includegraphics[width=\linewidth]{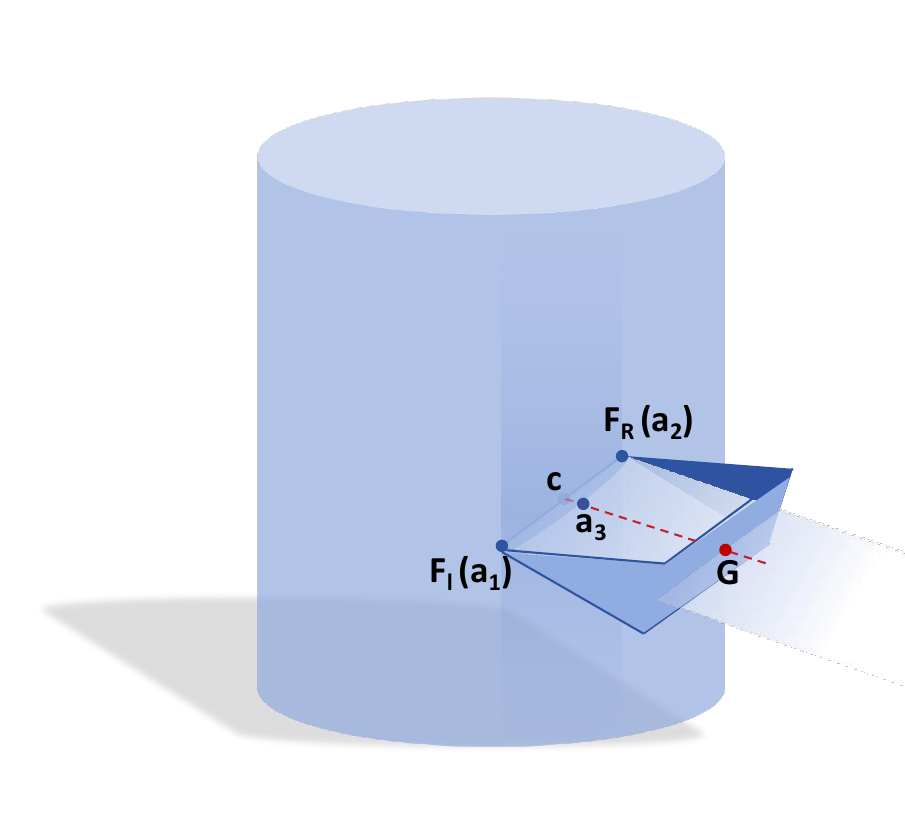}
{\small\captionof{figure}{Three predefined gripper points $\{F_L, F_R, G\}$ and a contact pose $(f_l, f_r, \theta)$.}}
\label{fig:gripper-triangle}
\end{minipage}
\fi

\iflatexml\else\vspace{-2em}\fi

\begin{proof}
    \iflatexml\else\mbox{}\\[-1.5em]\fi
    \begin{flushleft}
\textbf{(i)} $(f_l,f_r,\theta)\mapsto(a_1,a_2,a_3)$. From the two overlap constraints, $a_1=f_l$ and $a_2=f_r$. From $\mathrm{linear}$, $a_3$ satisfies
\begin{equation}
a_3=c+k\bigl(p(G)-c\bigr),\qquad a_3\in\partial A,\ k>0.
\end{equation}
Hence $(a_1,a_2,a_3)$ is uniquely determined.

\textbf{(ii)} $(a_1,a_2,a_3)\mapsto(f_l,f_r,\theta)$. From overlap, $f_l=a_1$ and $f_r=a_2$. Let $u:=(a_2-a_1)/\|a_2-a_1\|$, fix a right-handed orthonormal basis $(\mathbf{v}_1,\mathbf{v}_2)$ of $u^\perp$, and set $h:=\|p(G)-c\|$. Then
\begin{equation}
p(G,\theta)=c+h\bigl(\mathbf{v}_1\cos\theta+\mathbf{v}_2\sin\theta\bigr),\qquad
h\bigl(\mathbf{v}_1\cos\theta+\mathbf{v}_2\sin\theta\bigr)\times(a_3-c)=\mathbf{0}.
\end{equation}
Since $(a_3-c)\perp(a_1-a_2)$, the cross-product condition yields a unique $\theta$:
\begin{equation}
\tan\theta=\frac{(a_3-c)\cdot\mathbf{v}_2}{(a_3-c)\cdot\mathbf{v}_1}.
\end{equation}

The two maps are mutual inverses, so $(a_1,a_2,a_3)$ and $(f_l,f_r,\theta)$ are in analytic bijection.\qed
\end{flushleft}
\end{proof}
\subsection{Any Relationship Between In-Hand and Off-hand Object Can Be Expressed by Keypoint Constraints}
\label{sec:in-hand-motion}

\begin{proposition}
\label{prop:two-line-se3}
For any prescribed relative coordinate transform $T_{\mathrm{true}}\in SE(3)$ between two rigid objects, there exist two non-parallel spatial line-coincidence constraints such that any unknown $T\in SE(3)$ satisfying the resulting system is uniquely determined, and the solution is $T_{\mathrm{true}}$.
\end{proposition}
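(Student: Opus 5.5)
We interpret a spatial line-coincidence constraint as follows. Fix an oriented line $\ell^{h}_i$ rigidly attached to the in-hand object; in practice it is spanned by two on-hand keypoints $p_i, q_i$. Fix a line $\ell^{o}_i$ attached to the off-hand object. The constraint requires $T(\ell^{h}_i)=\ell^{o}_i$ as oriented lines, i.e.\ $T$ maps the line through $p_i,q_i$ onto the target line with the direction preserved. The plan has two parts: an existence step that constructs the two constraints from $T_{\mathrm{true}}$, and a uniqueness step that shows the resulting system pins down $T$.

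\textbf{Construction.} Write $T_{\mathrm{true}}=(R_{\mathrm{true}},t_{\mathrm{true}})$. Choose two lines in the in-hand frame that intersect at a point $o$ and have non-parallel unit directions $d_1,d_2$; for example, two edges of the gripper triangle $\{F_L,F_R,G\}$ from Section~\ref{sec:grasp-pose}. Define the target lines as their images,
\[
\ell^{o}_i=T_{\mathrm{true}}(\ell^{h}_i),
\]
which are non-parallel because $R_{\mathrm{true}}$ is invertible. By construction, $T_{\mathrm{true}}$ satisfies both constraints, so the system is consistent.

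\textbf{Uniqueness.} Let $T=(R,t)$ satisfy both constraints.

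1. Orientation-preserving coincidence of directions gives $Rd_i=R_{\mathrm{true}}d_i$ for $i=1,2$.
2. Since $d_1,d_2$ are linearly independent and $R,R_{\mathrm{true}}\in SO(3)$, we also have $R(d_1\times d_2)=R_{\mathrm{true}}(d_1\times d_2)$. This is where $\det=+1$ is used.
3. So $R$ and $R_{\mathrm{true}}$ agree on a basis, hence $R=R_{\mathrm{true}}$.
4. For translation, $T(o)$ must lie on both image lines. These are non-parallel and meet at the single point $T_{\mathrm{true}}(o)$, so $Ro+t=R_{\mathrm{true}}o+t_{\mathrm{true}}$, which gives $t=t_{\mathrm{true}}$.

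\textbf{Main obstacle.} The delicate part is the orientation and incidence bookkeeping. If lines are taken as unoriented sets, each direction is fixed only up to sign, giving $Rd_i=\pm R_{\mathrm{true}}d_i$. I would first check whether intersecting lines already kill the extra solutions: the map must send the intersection point $o$ to the intersection point of the targets, which fixes translation given $R$. The sign pairs then correspond to rotations by $\pi$ about the axes $d_1$, $d_2$, and $d_1\times d_2$, and these do not preserve the configuration unless $d_1\perp d_2$. Even then, extra solutions remain in the orthogonal case.

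So the cleanest fix is to build orientation into the constraint: the map must send $p_i\mapsto$ a point on the target ray with $q_i$ on the positive side. This matches how the harness specifies lines by ordered keypoint pairs.

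A further point is that for skew (non-intersecting) chosen lines, the translation argument needs the common perpendicular instead. I would avoid this case by choosing intersecting lines in the construction. This suffices because the statement only asserts that suitable constraints exist.
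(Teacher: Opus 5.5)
Your argument is correct under the oriented reading of line coincidence, and it follows a different route from the paper.

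\textbf{The paper's route.} The paper aligns both bodies' $Z$ axes with the first line. The first coincidence then reduces $T$ to the two-parameter family of a rotation $\theta$ about $Z$ and a slide $d$ along $Z$. It solves for $\theta$ by matching the $XY$ projections of the second line's direction. It solves for $d$ by forcing one point of $L_{B2}$ onto $L_{A2}$, which works because $L_{A2}$ is not parallel to $Z$. This handles any non-parallel pair, skew or intersecting, and reads as an explicit solving procedure.

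\textbf{Your route.} You split $T=(R,t)$. The two oriented directions, together with $R(d_1\times d_2)=Rd_1\times Rd_2$, fix $R$ on the basis $d_1,d_2,d_1\times d_2$. The intersection point then fixes $t$. This is shorter and coordinate-free, and it shows exactly where $\det R=+1$ and orientation are used. Restricting to intersecting lines is fine for an existence claim, but it is not needed. Once $R=R_{\mathrm{true}}$, constraint $i$ says $t-t_{\mathrm{true}}\in\mathbb{R}\,R_{\mathrm{true}}d_i$. These two lines through the origin meet only at $0$, so skew lines need no common-perpendicular argument.

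\textbf{A slip in your obstacle discussion.} Take the half-turn about the axis through $o$ along $d_1\times d_2$. It reverses both directions and maps each unoriented line to itself for every non-parallel pair, not only when $d_1\perp d_2$. For skew lines, the half-turn about the common perpendicular does the same. So with unoriented lines, uniqueness always fails, and building orientation into the constraint is necessary rather than merely convenient.

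The paper's proof depends on orientation silently as well. Its reduction step drops the possible flip of the $Z$ axis, and its $\theta$ equation fixes a sign for $\mathbf{u}_{xy}$. Your explicit oriented formulation is what actually makes the statement true.
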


\begin{proof}
\iflatexml\else
\mbox{}\\[-0.85em]
\noindent
\begin{minipage}[t]{0.5\linewidth}
\vspace{0pt}
\raggedright
\fi
\textit{Existence.} Place the objects at $T_{\mathrm{true}}$. In the global frame pick two non-parallel lines $\ell_1,\ell_2$, with body-$A$ parameters $L_{A1},L_{A2}$ and body-$B$ parameters $L_{B1},L_{B2}$. Rigid kinematics gives
\begin{equation}
T_{\mathrm{true}}(L_{B1})=L_{A1},\qquad T_{\mathrm{true}}(L_{B2})=L_{A2}.
\label{eq:true-line-map}
\end{equation}
Hence such a pair of characteristic line correspondences exists.
\iflatexml
\begin{figure}[htbp]
\centering
\includegraphics[width=\linewidth]{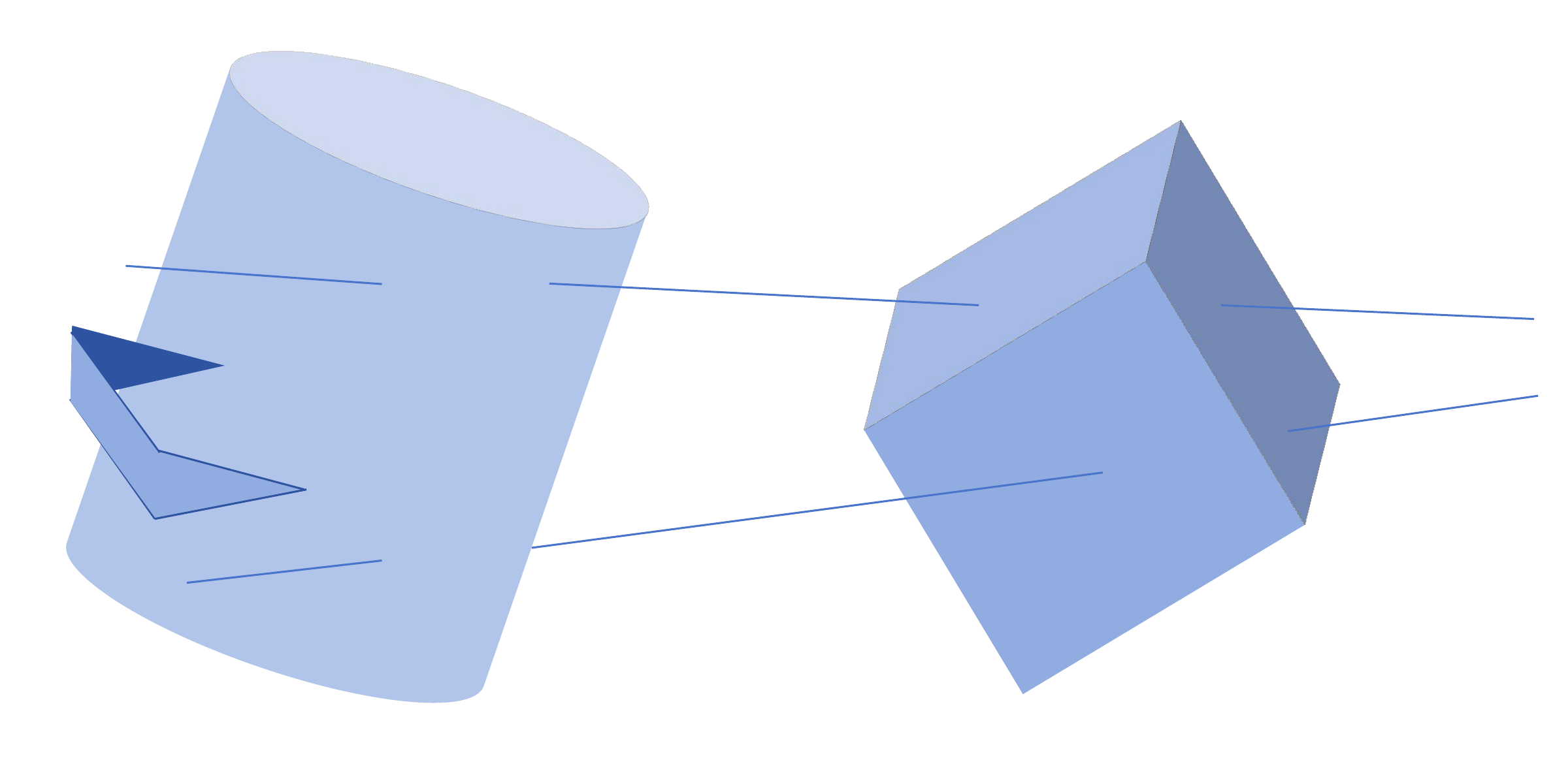}
\caption{Two non-parallel line coincidences fix a unique $T\in SE(3)$.}
\label{fig:gripper-pose}
\end{figure}
\else
\end{minipage}\hfill
\begin{minipage}[t]{0.46\linewidth}
\vspace{0pt}
\centering
\includegraphics[width=\linewidth]{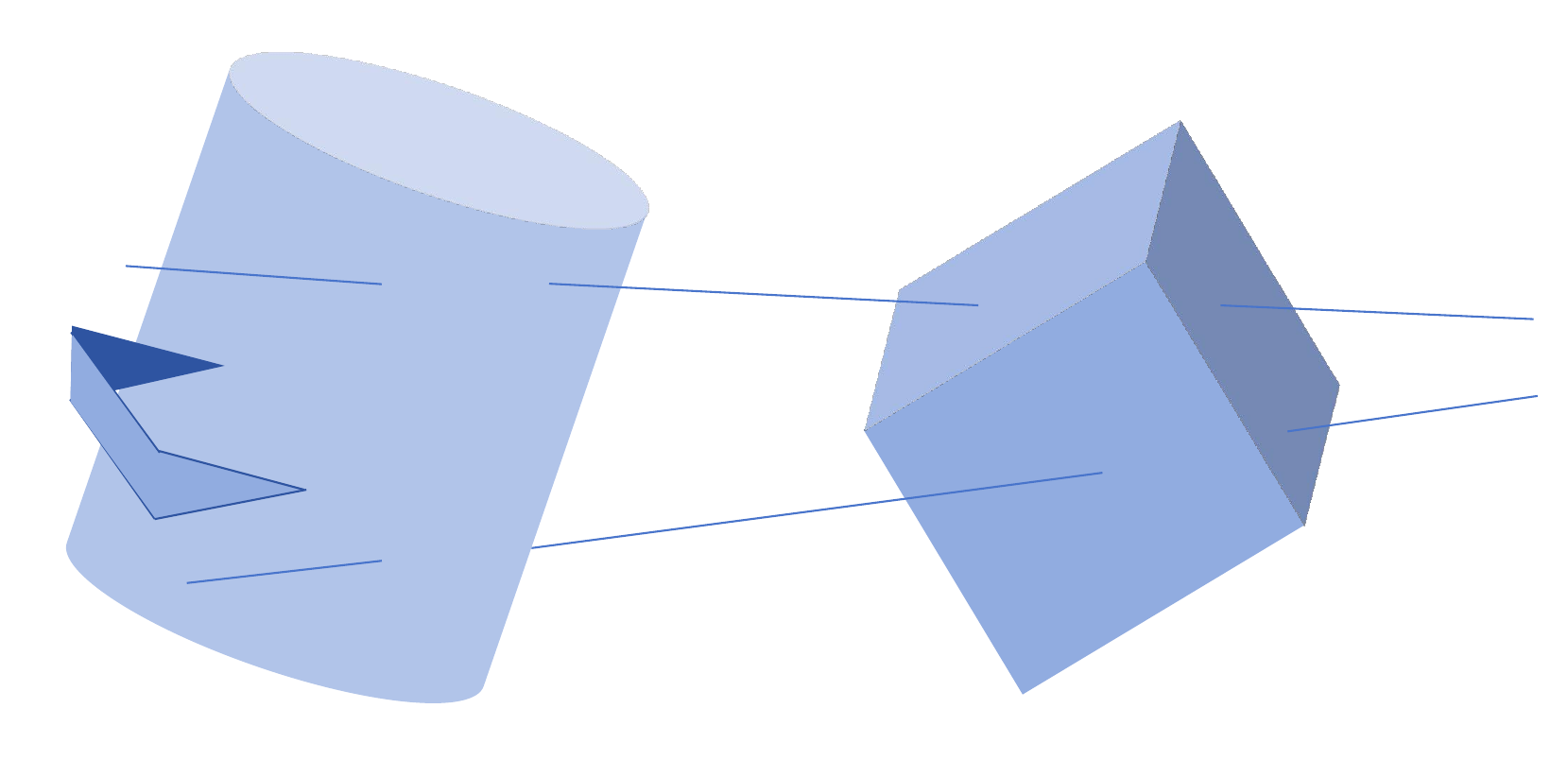}
\captionof{figure}{Two non-parallel line coincidences fix a unique $T\in SE(3)$.}
\label{fig:gripper-pose}
\end{minipage}
\fi

\textit{Reduction.} Let $T\in SE(3)$ satisfy the same two coincidences. Align local frames so that the $Z$ axis of $A$ coincides with $L_{A1}$ and the $Z$ axis of $B$ with $L_{B1}$. Then $T(L_{B1})=L_{A1}$ forces $T$ to preserve the $Z$ axis, so $T$ retains two degrees of freedom: a rotation $\theta$ about $Z$ and a translation $d$ along $Z$. For $\mathbf{p}=(x,y,z)^\top$ on $B$, the image $\mathbf{p}'=(x',y',z')^\top$ is $x'=x\cos\theta-y\sin\theta,\ y'=x\sin\theta+y\cos\theta,\ z'=z+d$,
\begin{equation}\label{eq:reduced-se3}
x'=x\cos\theta-y\sin\theta,\quad
y'=x\sin\theta+y\cos\theta,\quad
z'=z+d.
\end{equation}

\textit{Unique $\theta$.} The second constraint is $T(L_{B2})=L_{A2}$. Non-parallelism implies $L_{B2}\nparallel Z$. Let $\mathbf{v}_{xy}=(v_x,v_y)^\top\neq\mathbf{0}$ be the $XY$ projection of the direction of $L_{B2}$, and $\mathbf{u}_{xy}=(u_x,u_y)^\top$ that of $L_{A2}$. Rotation about $Z$ preserves the projected length, $\|\mathbf{v}_{xy}\|=\|\mathbf{u}_{xy}\|$, and
\begin{equation}
\begin{pmatrix}\cos\theta&-\sin\theta\\ \sin\theta&\cos\theta\end{pmatrix}
\begin{pmatrix}v_x\\ v_y\end{pmatrix}
=
\begin{pmatrix}u_x\\ u_y\end{pmatrix}.
\label{eq:theta-from-xy}
\end{equation}
The determinant is nonzero, so this system has a unique real solution $\theta\in[0,2\pi)$.

\textit{Unique $d$.} Pick a known point $\mathbf{p}_0=(x_0,y_0,z_0)^\top$ on $L_{B2}$. Substituting $x_0,y_0$ and the solved $\theta$ into~\eqref{eq:reduced-se3} yields constants $x'_0,y'_0$. The image of $\mathbf{p}_0$ lies on $L_{A2}$, and $L_{A2}\nparallel Z$, so $(x'_0,y'_0)$ determines a unique height $Z_{\mathrm{target}}$ on $L_{A2}$. The third identity in~\eqref{eq:reduced-se3} gives $z_0+d=Z_{\mathrm{target}}$, hence $d=Z_{\mathrm{target}}-z_0$.

Thus $T$ is uniquely solved and equals $T_{\mathrm{true}}$.
\end{proof}

\section{Experiments}
\label{sec:experiments}

\subsection{RoboHarness can outperform VLA and WAM across tasks}
\label{sec:main-benchmarks}

\paragraph{BEHAVIOR 2025: Long-horizon household tasks.}
We compare against state-of-the-art VLAs such as Galaxea G0.5 and $\pi_{0.5}$, evaluating both success rate and subgoal completion rate \citep{liu2026g05,intelligence2025pi05}. We additionally include the first- and second-place winners of the BEHAVIOR Challenge 2025 (RLC and Comet, respectively) \citep{larchenko2025rlc,bai2025comet}, whose $q$-scores are reported on the official leaderboard. The RoboHarness column is Qwen3.8-Flash-Next. Each entry is the mean $q$-score over five instances per task. Due to the limited evaluation budget, we only run five instances per task with seed \texttt{20260911}, using slots $0,3,5,7,9$ (instance IDs $301,304,306,308,310$). The prompt is specific to each task but
shared across instances, and is written by a human.  The maximum tick count is set separately for each task, and the ten limits are listed in Appendix~\ref{app:behavior-instances}.
Per-instance scores and the rollout archive are in Appendix~\ref{app:behavior-instances}.

\begin{table}[H]
\caption{BEHAVIOR Challenge 2025 task results. The RoboHarness column is Qwen3.8-Flash-Next, and each entry is the five-instance mean in Appendix~\ref{app:behavior-instances}.}
\label{tab:behavior-results}
\centering
\small
\setlength{\tabcolsep}{3.5pt}
\renewcommand{\tabularxcolumn}[1]{m{#1}}
\begin{tabularx}{\linewidth}{r>{\raggedright\arraybackslash}Xrrrrr}
\toprule
ID & Task & RLC & Comet & $\pi_0.5$ 4ep & G0.5 1ep & RoboHarness \\
\midrule
0 & turning on radio & 0.6000 & 1.0000 & 0.1500 & 0.3000 & 0.4000 \\
1 & picking up trash & 0.6667 & 0.7667 & 0.5500 & 0.8167 & 0.8667 \\
2 & putting away Halloween decorations & 0.2000 & 0.5000 & 0.3714 & 0.4857 & 0.6857 \\
3 & cleaning up plates and food & 0.1857 & 0.0000 & 0.3857 & 0.1786 & 0.2571 \\
4 & can meat & 0.0000 & 0.0000 & 0.0333 & 0.0111 & 0.0000 \\
5 & setting mousetraps & 0.3333 & 0.0000 & 0.1083 & 0.5083 & 0.8333 \\
6 & hiding Easter eggs & 0.1444 & 0.2444 & 0.0778 & 0.0500 & 0.4222 \\
7 & picking up toys & 0.3000 & 0.0000 & 0.1833 & 0.3167 & 0.3667 \\
8 & rearranging kitchen furniture & 0.3000 & 0.3750 & 0.3875 & 0.3500 & 0.4000 \\
9 & putting up Christmas decorations inside & 0.4333 & 0.0000 & 0.5611 & 0.4667 & 0.2222 \\
\midrule
\multicolumn{2}{l}{Percentage} & 31.635\% & 28.861\% & 28.084\% & 34.838\% & 44.539\% \\
\bottomrule
\end{tabularx}
\end{table}

\subsection{Does the RoboHarness Generalize across Objects?}
\label{sec:primitive-ablation}

\iflatexml\else
\noindent
\begin{minipage}[t]{0.56\linewidth}
\vspace{0pt}
\raggedright
\fi
\paragraph{Objects.}
The catalog is the 100 rigid objects in Figure~\ref{fig:sec53-objects}. Assets are taken from the BEHAVIOR object library \citep{li2024behavior}, have extent at most $0.21$\,m, and have mass at most $1.2$\,kg. \texttt{saucepan} and one recorded outlier are kept outside those two cuts. 
\iflatexml\else\vspace{0.8em}\fi

\paragraph{Tasks.}
The task is to pick up each of the $100$ objects, and to place each object in the embodiment and simulation environment on which the corresponding model was trained. For ASPIRE and RoboHarness, the object is placed on the floor and the robot starts from a stooping pose. For models trained on LIBERO \citep{liu2023libero}, the LIBERO tabletop scene is cleared and the object is placed at the center of the table. Success conditions:

\iflatexml\else\vspace{1.5em}\fi
\iflatexml
\noindent
\else
\noindent\hspace*{\fill}
\fi
\begin{tabular}{@{}ll@{}}
    \toprule
    Max ticks & $2000$ \\
    Success & $\max z - z_0 \ge 5$\,cm \\
    \bottomrule
    \end{tabular}
\iflatexml\else\hspace*{\fill}\fi

\iflatexml
\begin{figure}[htbp]
\centering
\includegraphics[width=\linewidth]{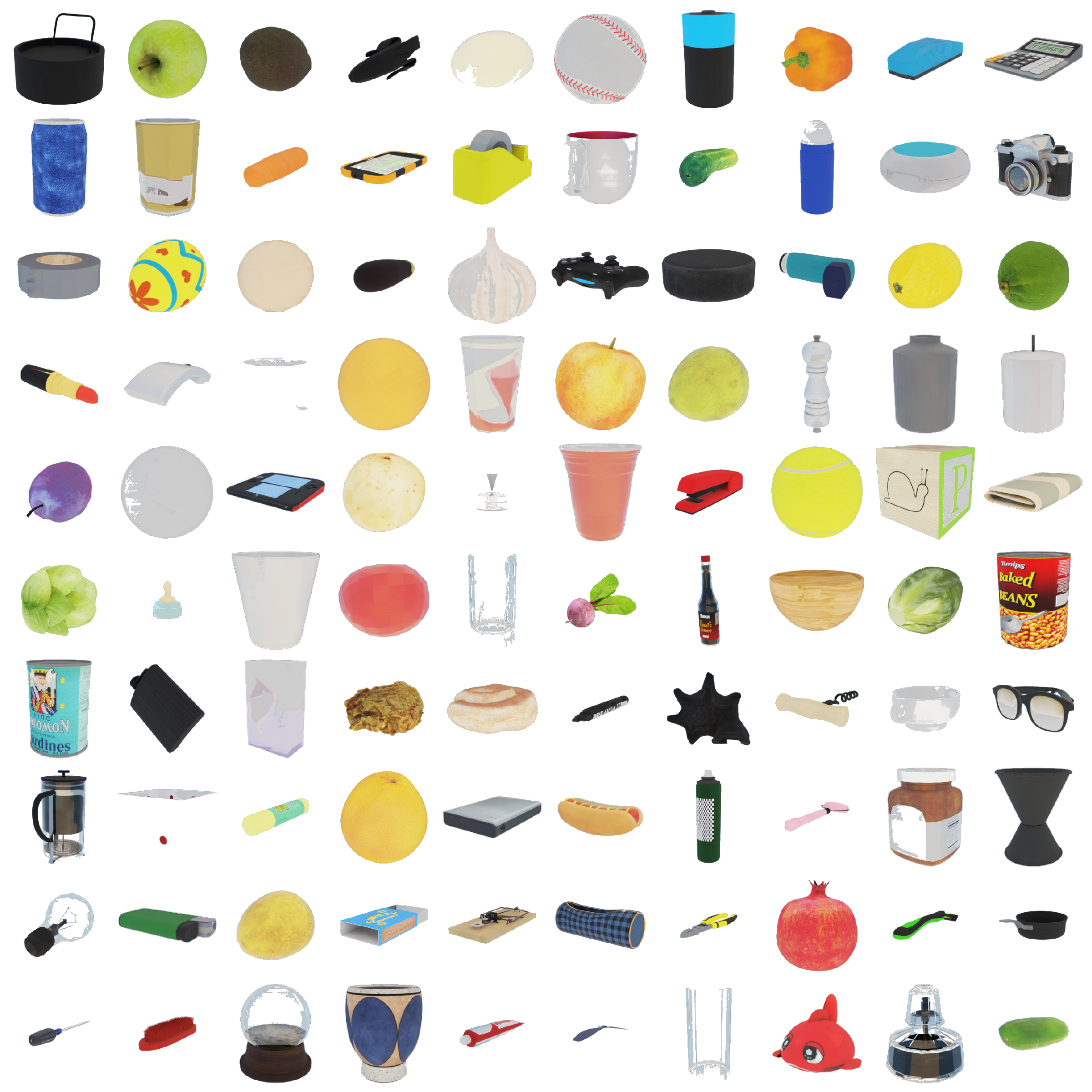}
\caption{The 100 objects used to evaluate generalization across objects. The catalog is drawn from the BEHAVIOR object library.}
\label{fig:sec53-objects}
\end{figure}
\else
\end{minipage}\hfill
\begin{minipage}[t]{0.40\linewidth}
\vspace{0pt}
\centering
\includegraphics[width=\linewidth]{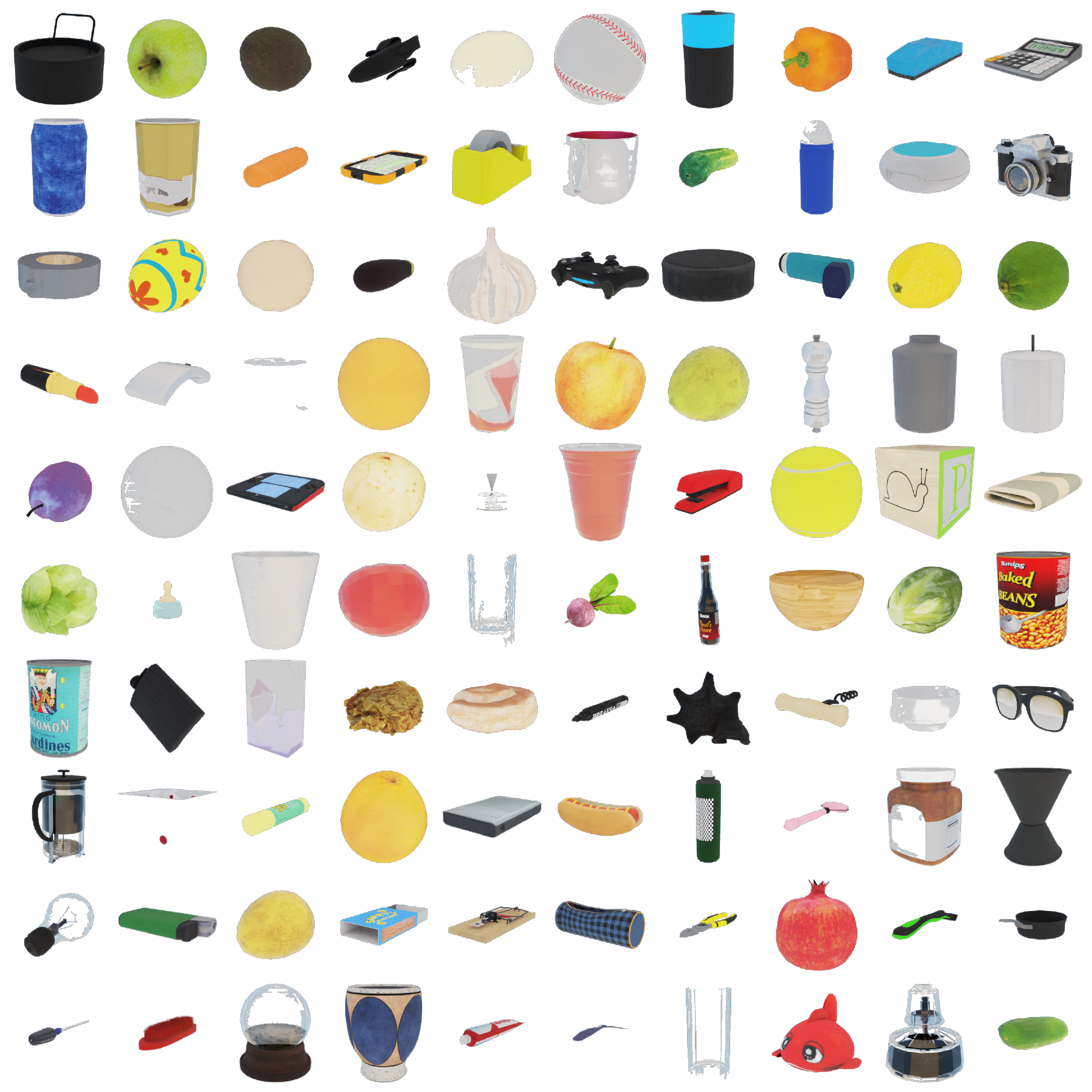}
\captionof{figure}{The 100 objects used to evaluate generalization across objects. The catalog is drawn from the BEHAVIOR object library.}
\label{fig:sec53-objects}
\end{minipage}
\fi
\paragraph{Analysis and results.}
\paragraph{Generalization is not positively correlated with the amount of robot data.} In Figure~\ref{fig:robot-hours}, Faster-WAM reaches $61/100$ from a few hours of robot demonstrations, while several models trained on thousands of hours score lower \citep{zhao2026fasterwam}. The three world action models initialized from Wan2.2, Faster-WAM, OpenWAM-$\alpha$, and LingBot-VA, sit above most vision-language-action models despite very different robot-data budgets \citep{wan2025wan22,wang2026openwam,lingbotva2026}.

\paragraph{Multimodal capabilities matter more than model parameter count.} The RoboHarness points use no robot training set and still score $68$--$86$. These points sit above the learned policies. Among those policies, the smaller world action models outscore larger vision-language-action models such as OpenVLA-OFT \citep{kim2025oft}. Per-object outcomes are in Appendix~\ref{app:object-transfer}.

\iflatexml\begin{figure}[htbp]\else\begin{figure}[H]\fi
\centering
\iflatexml
\includegraphics[width=\linewidth]{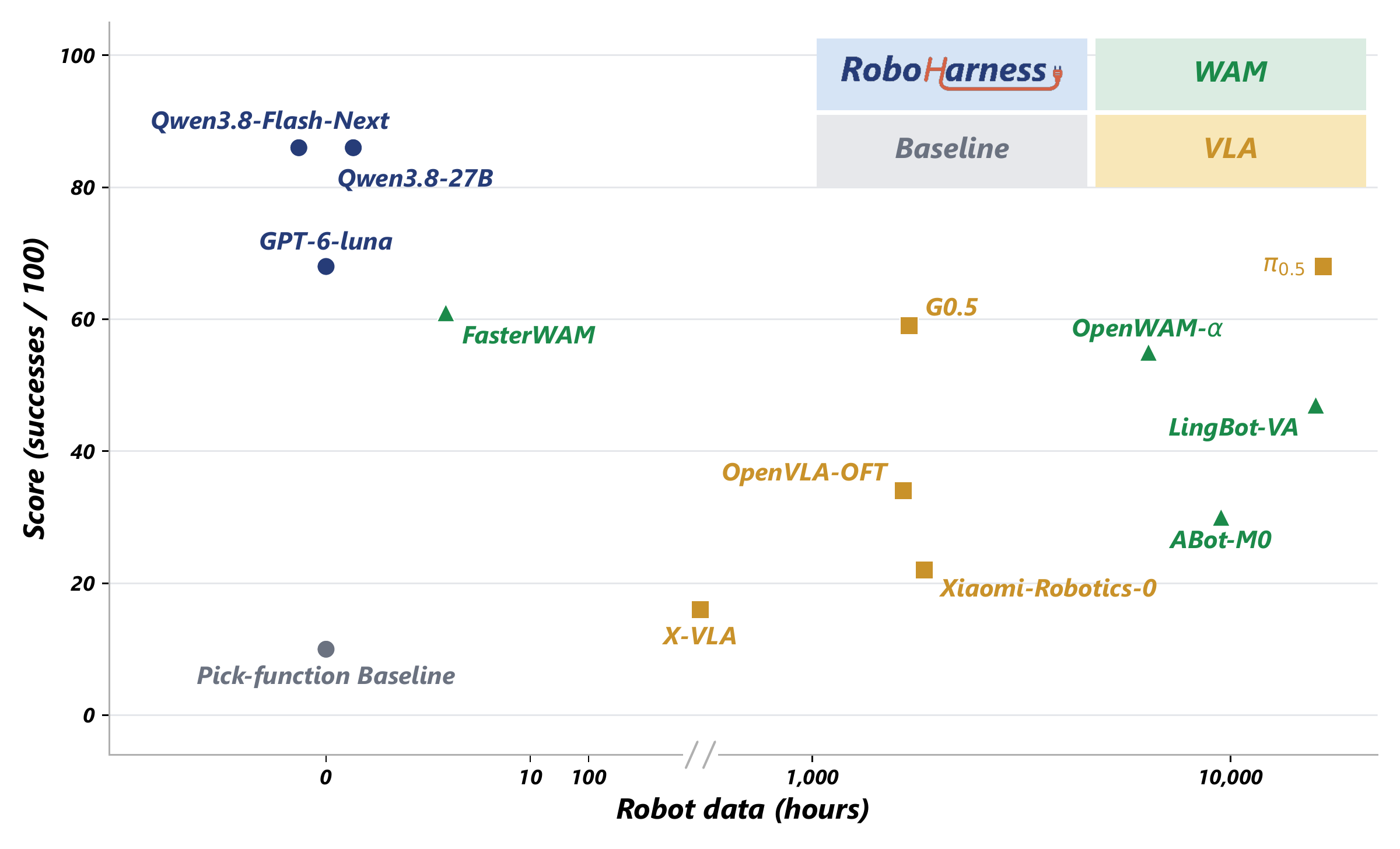}
\else
\includegraphics[width=\linewidth]{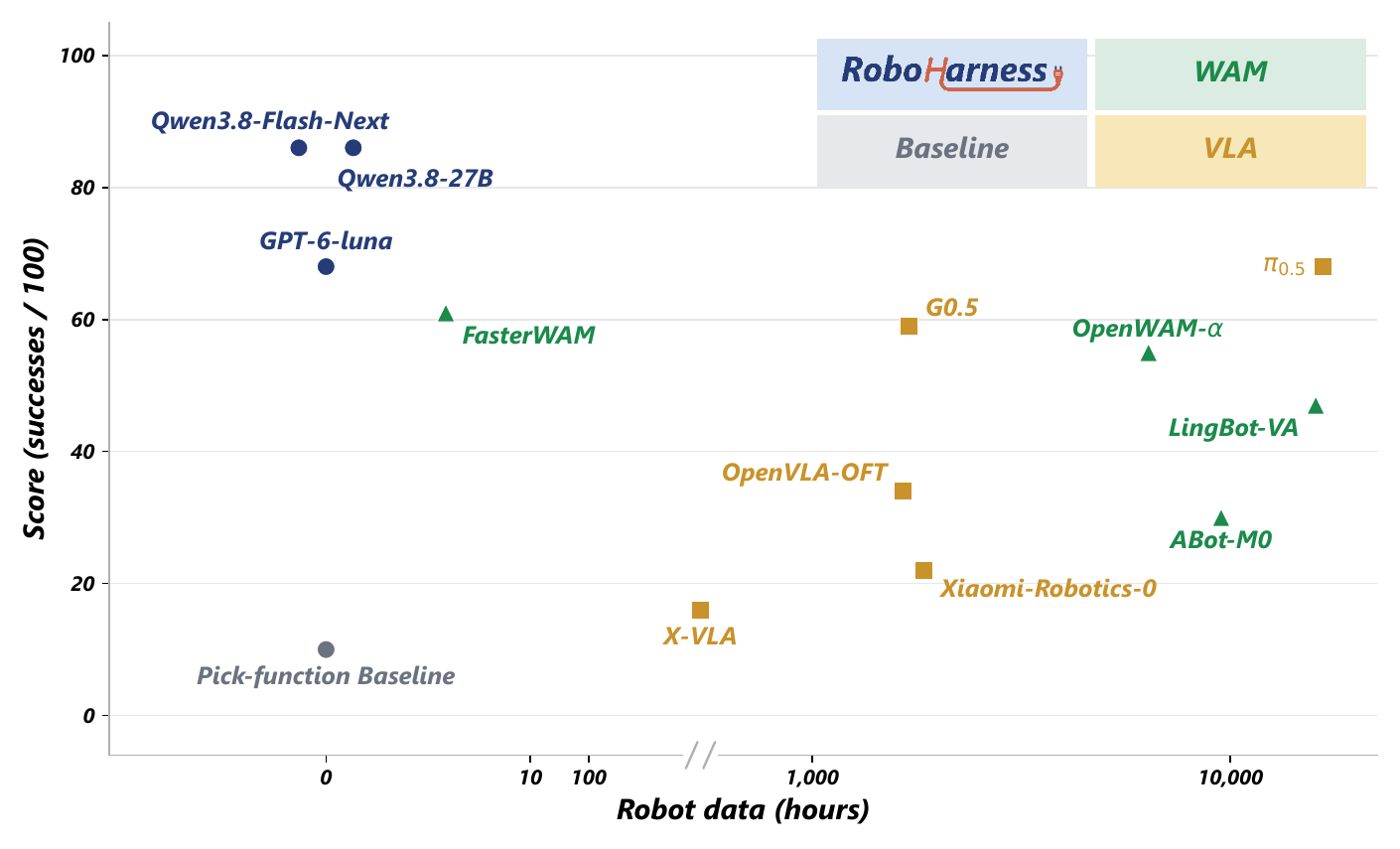}
\fi
\caption{Successes out of 100 objects versus estimated hours of robot data. Colors: VLA (blue), world action model (green), RoboHarness (red), and the ASPIRE-pick baseline (gray). The hour conversion is in Appendix~\ref{app:robot-hours}. Qwen3.8-Flash-Next and Qwen3.8-27B both have $0$ robot-data hours. Their markers are drawn $0.14$ axis units left and right of $0$, respectively, so the two points remain visible.}
\label{fig:robot-hours}
\end{figure}

\section{Limitation \& Discussion}
\label{sec:discussion}

This work explores a new technical path for generalist Embodied AI, constructing a simple but effective harness design that lets an LLM directly command a robot, and experimentally shows that this approach substantially outperforms both VLA and WAM baselines. 

LLMs can come up with brilliant 
plans and solutions for embodied tasks, but often fail to ground in the robot's physical embodiment, leading to failures such as knocking over objects, falling, or getting stuck. To mitigate this on BEHAVIOR long-horizon mobile manipulation tasks, we hand-craft a task-specific prompt for every task that specifies the execution steps and the behavioral boundaries the model must respect.

Future work aims to close this gap further by training the model or evolving the harness to become more physically grounded.

\subsection*{AI use statement}
Generative AI tools were used in three ways. (i) They assisted writing. (ii) They assisted running the experiments: launching jobs, monitoring long rollouts, and flagging failed runs so that each experiment kept running stably, while every experimental setting and every reported number was checked by the authors. (iii) They assisted coding: writing and debugging RoboHarness human interface.

\bibliography{references}
\bibliographystyle{iclr2027_conference}

\clearpage
\appendix

\section{BEHAVIOR Five-Instance Scores}
\label{app:behavior-instances}

The RoboHarness column of Table~\ref{tab:behavior-results} is a five-instance sample of each of the first ten BEHAVIOR Challenge 2025 tasks, and every rollout in that column uses Qwen3.8-Flash-Next.
The robot is the R1~Pro embodiment of the challenge.
The sample is fixed by seed \texttt{20260911}.
Every task loads the same evaluation slots $0,3,5,7,9$, which are activity instances $301,304,306,308,310$.
Slots $1,2,4,6,8$ (instances $302,303,305,307,309$) are outside this sample.
Each instance is one official rollout.
The prompt is specific to each task but shared across instances, and is written by a human. It includes a procedure for accomplishing the task.
The official BEHAVIOR Challenge 2025 evaluator scores the rollout, and a cell in Table~\ref{tab:behavior-instance-q} is \texttt{q\_score.final} from that evaluation.
The task mean is the unweighted average of those five scores, rounded to four decimals, and is the number reported in Table~\ref{tab:behavior-results}.

An episode ends when the task goal is already satisfied or when the simulator reaches the maximum tick count of that task.
The maximum is the official 2025 limit for the task, set to twice the mean length of its human demonstrations, and it is counted in simulator control steps.
Table~\ref{tab:behavior-max-ticks} lists the limit used for each task.

\begin{table}[H]
\caption{Official maximum tick count for each BEHAVIOR Challenge 2025 task in this sample. The episode stops at this count unless the goal is already satisfied.}
\label{tab:behavior-max-ticks}
\centering
\small
\setlength{\tabcolsep}{4pt}
\begin{tabular}{@{}rlr@{}}
\toprule
ID & Task & Max ticks \\
\midrule
0 & turning on radio & 4299 \\
1 & picking up trash & 10535 \\
2 & putting away Halloween decorations & 27664 \\
3 & cleaning up plates and food & 27392 \\
4 & can meat & 23694 \\
5 & setting mousetraps & 20343 \\
6 & hiding Easter eggs & 15239 \\
7 & picking up toys & 37781 \\
8 & rearranging kitchen furniture & 17886 \\
9 & putting up Christmas decorations inside & 27437 \\
\bottomrule
\end{tabular}
\end{table}

\begin{table}[H]
\caption{Per-instance $q$-score for the five-instance RoboHarness sample. Slots $0,3,5,7,9$ are instances $301,304,306,308,310$.}
\label{tab:behavior-instance-q}
\centering
\scriptsize
\setlength{\tabcolsep}{2.4pt}
\renewcommand{\tabularxcolumn}[1]{m{#1}}
\begin{tabularx}{\linewidth}{r>{\raggedright\arraybackslash}Xrrrrrr}
\toprule
ID & Task & 301 & 304 & 306 & 308 & 310 & Mean \\
 & & slot 0 & slot 3 & slot 5 & slot 7 & slot 9 & \\
\midrule
0 & turning on radio & 0.0000 & 0.0000 & 0.0000 & 1.0000 & 1.0000 & 0.4000 \\
1 & picking up trash & 1.0000 & 0.3333 & 1.0000 & 1.0000 & 1.0000 & 0.8667 \\
2 & putting away Halloween decorations & 0.5714 & 0.7143 & 0.4286 & 0.7143 & 1.0000 & 0.6857 \\
3 & cleaning up plates and food & 0.2857 & 0.2857 & 0.2857 & 0.1429 & 0.2857 & 0.2571 \\
4 & can meat & 0.0000 & 0.0000 & 0.0000 & 0.0000 & 0.0000 & 0.0000 \\
5 & setting mousetraps & 1.0000 & 1.0000 & 0.5000 & 1.0000 & 0.6667 & 0.8333 \\
6 & hiding Easter eggs & 1.0000 & 0.0000 & 0.7778 & 0.3333 & 0.0000 & 0.4222 \\
7 & picking up toys & 0.0000 & 0.6667 & 0.5000 & 0.6667 & 0.0000 & 0.3667 \\
8 & rearranging kitchen furniture & 0.2500 & 0.5000 & 0.2500 & 0.5000 & 0.5000 & 0.4000 \\
9 & putting up Christmas decorations inside & 0.3333 & 0.2222 & 0.3333 & 0.1111 & 0.1111 & 0.2222 \\
\midrule
\multicolumn{2}{l}{Task 0--9 mean} &  &  &  &  &  & 0.44539 \\
\bottomrule
\end{tabularx}
\end{table}

\section{Full Embodied Skill Catalog}
\label{app:skill-guidelines}
We distill the procedures used across individual tasks into a set of reusable embodied skills, listed in Table~\ref{tab:skill-guidelines}.
\noindent
\begin{table*}[t]
\caption{Embodied skills shipped with the \texttt{embodied\_claude\_code} plugin, excluding the trivial single-object skills \texttt{close-box}, \texttt{cut-object}, and \texttt{stand-trash-can-upright}. Each row is condensed to one sentence describing what the skill does.}
\label{tab:skill-guidelines}
\centering
\small
\setlength{\tabcolsep}{6pt}
\renewcommand{\arraystretch}{1.18}
\begin{tabularx}{\textwidth}{@{}>{\raggedright\arraybackslash}p{4.2cm}X@{}}
\toprule
Skill & Description \\
\midrule
\texttt{behavior-v2-baseline}
& contract of scene perception, task decomposition, primitive selection, and execution monitoring. \\
\texttt{navigate-to-target}
& Drives the chassis toward a marked target. \\
\texttt{open-doors-and-drawers}
& Open hinged doors or drawers. \\
\texttt{pick-up-object}
& Plans and executes a grasp of a visible object. \\
\texttt{place-object-in-container}
& Aligns a held object over an open container and releases it only after the object has settled inside. \\
\texttt{traverse-narrow-passages}
& Drives chassis through tight doorway . \\
\bottomrule
\end{tabularx}
\end{table*}

\section{Evaluation Setting and Robot-Data Hours}
\label{app:robot-hours}

\paragraph{Test.}
Each learned policy is evaluated on a cleared LIBERO table. The object is placed at the center, the prompt is \texttt{pick up the \{name\}}, and the trial runs for $2000$ steps. The trial succeeds when the target root rises by at least $5$\,cm. Qwen3.8-Flash and gpt-6-luna use the RoboHarness on the R1~Pro, with the object on the floor and the same success rule. An hour counts robot recordings and human egocentric manipulation video once. Web-video generation and VQA are not counted.

\paragraph{Qwen3.8-27B.}
\begin{itemize}[leftmargin=1.2em,itemsep=0.15em,topsep=0.2em]
\item The run calls the served model \texttt{Qwen3.8-27B-FP8} with the skill prompt. This model has no released robot checkpoint on Hugging Face. The trial uses the shared $2000$-step protocol and the $5$\,cm lift.
\item The run has no robot manipulation training set, so the robot-data hours are $0$.
\end{itemize}

\paragraph{Qwen3.8-Flash.}
\begin{itemize}[leftmargin=1.2em,itemsep=0.15em,topsep=0.2em]
\item The run calls the served model \texttt{Qwen3.8-Flash-Next-FP8} with the skill prompt. This model has no released robot checkpoint on Hugging Face.
\item The run has no robot manipulation training set, so the robot-data hours are $0$.
\end{itemize}

\paragraph{GPT-6-luna.}
\begin{itemize}[leftmargin=1.2em,itemsep=0.15em,topsep=0.2em]
\item The run calls the served model \texttt{gpt-6-luna} with the skill prompt. This model has no released robot checkpoint on Hugging Face.
\item The run has no robot manipulation training set, so the robot-data hours are $0$.
\end{itemize}

\paragraph{$\pi_{0.5}$.}
\begin{itemize}[leftmargin=1.2em,itemsep=0.15em,topsep=0.2em]
\item The run uses the released \texttt{pi05\_libero} checkpoint from the openpi release. The policy replans every $5$ actions.
\item The paper states about $400$ hours of mobile-manipulator data, and states that $97.6\%$ of the examples in the first training phase come from other robots or the web \citep{intelligence2025pi05}. Converting the whole first-phase mixture by that share gives $16{,}666.7$ hours. The paper does not separate the web portion from the other robot data.
\end{itemize}

\paragraph{Faster-WAM.}
\begin{itemize}[leftmargin=1.2em,itemsep=0.15em,topsep=0.2em]
\item The run uses \texttt{hustvl/FasterWAM}, the released LIBERO weights at step $21700$. The policy replans every $10$ actions.
\item The paper initializes the model from Wan2.2-5B, marks embodied pretraining as absent, and trains on four LIBERO suites of $500$ demonstrations for $10$ epochs \citep{zhao2026fasterwam,wan2025wan22}. The training release \texttt{yuanty/LIBERO-fastwam} states $1{,}712$ episodes and $277{,}713$ frames at $20$ frames per second, which converts to $3.86$ hours \citep{libero2026fastwamdata}. The Wan corpus is web video and is not counted \citep{wan2025wan}.
\end{itemize}

\paragraph{Galaxea G0.5.}
\begin{itemize}[leftmargin=1.2em,itemsep=0.15em,topsep=0.2em]
\item The run uses the \texttt{g05-libero} checkpoint released in \texttt{OpenGalaxea/G05}. The policy predicts $10$ action steps at a time.
\item The G0.5 paper states $14$ embodiments and does not state a training duration \citep{liu2026g05}. The estimate therefore uses stage 1 of the previous G0 paper, which states about $1{,}000$ hours of Open X-Embodiment, $500$ hours of Galaxea Open-World, and $200$ hours collected in-house \citep{jiang2025galaxea,oxe2024}. The no-op-filtered LIBERO demonstrations used for this checkpoint convert to $3.80$ hours at $20$\,Hz. Adding these durations gives $1{,}703.8$ hours.
\end{itemize}

\paragraph{OpenWAM-$\alpha$.}
\begin{itemize}[leftmargin=1.2em,itemsep=0.15em,topsep=0.2em]
\item The run uses \texttt{OpenWAM/OpenWAM-Alpha-Sim-LIBERO} at step $10690$. Denoising uses $10$ steps, and the inference horizon is $10$ actions.
\item The paper lists $1{,}442$ hours of egocentric human video, $1{,}794$ hours of AgiBot, $686$ hours of RoboCOIN, $1{,}007$ hours of DROID, and $1{,}440$ hours of InternData-A1, and states that this mixture is seen for one epoch \citep{wang2026openwam,wu2025robocoin,tian2025interndata}. Adding these durations gives $6{,}369$ hours. The Wan2.2 initialization is web video and is not counted \citep{wan2025wan,wan2025wan22}.
\end{itemize}

\paragraph{LingBot-VA.}
\begin{itemize}[leftmargin=1.2em,itemsep=0.15em,topsep=0.2em]
\item The run uses \texttt{robbyant/lingbot-va-posttrain-libero-long}. The trial uses the shared setting of $2000$ steps and a $5$\,cm lift.
\item The paper states approximately $16$K hours of robot manipulation data, and states that pretraining consumes $1.4$T tokens \citep{lingbotva2026}. The plotted value is the stated $16{,}000$ hours. The token count records how much data the run consumes, and it is not added again as a second copy of the corpus.
\end{itemize}

\paragraph{OpenVLA-OFT.}
\begin{itemize}[leftmargin=1.2em,itemsep=0.15em,topsep=0.2em]
\item The run uses \texttt{moojink/openvla-7b-oft-finetuned-libero-10}. The unnormalization key is \texttt{libero\_10\_no\_noops}, and the client executes the full $8$-step action chunk.
\item The OpenVLA paper states $970$k demonstrations from $27$ Open X-Embodiment datasets, trained for $27$ epochs, with the DROID portion removed for the last third of training \citep{kim2024openvla,oxe2024}. The released frame counts of those $27$ datasets convert to $1{,}651.4$ hours. The OFT paper fine-tunes this checkpoint on LIBERO and does not add another pretraining corpus \citep{kim2025oft}.
\end{itemize}

\paragraph{ABot-M0.}
\begin{itemize}[leftmargin=1.2em,itemsep=0.15em,topsep=0.2em]
\item The run uses \texttt{acvlab/ABot-M0-LIBERO} at step $40000$. The action chunk is $10$ steps, and the unnormalization key is \texttt{franka}.
\item The paper states that UniACT contains more than six million trajectories and $9{,}500$ hours \citep{abotm02026}. The plotted value is the stated $9{,}500$ hours.
\end{itemize}

\paragraph{Xiaomi-Robotics-0.}
\begin{itemize}[leftmargin=1.2em,itemsep=0.15em,topsep=0.2em]
\item The run uses \texttt{XiaomiRobotics/Xiaomi-Robotics-0-LIBERO}. The task id is \texttt{libero\_all}, and the policy replans every $10$ actions.
\item The paper states about $200\times 10^{6}$ robot timesteps on a unified $30$\,Hz timeline \citep{xiaomi2026robotics0}. These timesteps convert to $1{,}851.9$ hours. The $338$ hours and $400$ hours of in-house tasks stated in the paper are already inside this total.
\end{itemize}

\paragraph{X-VLA.}
\begin{itemize}[leftmargin=1.2em,itemsep=0.15em,topsep=0.2em]
\item The run uses \texttt{2toINF/X-VLA-Libero}. The domain id is $3$, and denoising uses $10$ steps.
\item The pretraining table lists episode counts: AgiBot has $141$K episodes, DROID is listed as two rows of $45$K episodes, and RoboMIND is listed as $19$K, $2$K, $25$K, and $11$K episodes \citep{zheng2025xvla}. AgiBot states $1{,}001{,}552$ trajectories and $2{,}976.4$ hours, which converts to $419.0$ hours for the listed $141$K episodes \citep{agibot2025world}. DROID states $76$k successful trajectories and $350$ hours, and the two $45$K rows are one subset of that release, which converts to $207.2$ hours \citep{khazatsky2024droid}. RoboMIND states $107$k trajectories and $305.5$ hours for the full corpus, which converts to $162.7$ hours for the listed $57$K episodes \citep{wu2024robomind}. Adding these three durations gives $788.9$ hours. The sampling weights in the X-VLA table are mixture weights, and they are not used as durations.
\end{itemize}

\paragraph{ASPIRE pick function.}
\begin{itemize}[leftmargin=1.2em,itemsep=0.15em,topsep=0.2em]
\item The baseline has no learned checkpoint. Each object is scored by one call of the ASPIRE pick function, and the same $5$\,cm lift is a success \citep{lu2026aspire}.
\item The baseline has no robot manipulation training set, so the robot-data hours are $0$.
\end{itemize}

\section{Per-Object Outcomes}
\label{app:object-transfer}

Table~\ref{tab:object-outcomes} lists the runs in Figure~\ref{fig:robot-hours}. A check is a success and a cross is a failure under the $5$\,cm rule of Section~\ref{sec:primitive-ablation}.

\setlength{\LTcapwidth}{\textwidth}
\iflatexml
\begin{table}[htbp]
\centering
\else
\begingroup
\scriptsize
\setlength{\tabcolsep}{1.6pt}
\renewcommand{\arraystretch}{0.92}
\begin{longtable}{@{}r>{\raggedright\arraybackslash}p{2.35cm}*{13}{c}@{}}
\fi
\caption{Per-object outcome on the 100 objects. A check is a success under the $5$\,cm rule and a cross is a failure. Columns are Qwen3.8-27B, Qwen3.8-Flash, gpt-6-luna, $\pi_{0.5}$, Faster-WAM, G0.5, OpenWAM-$\alpha$, LingBot-VA, OpenVLA-OFT, ABot-M0, Xiaomi-Robotics-0, X-VLA, and the ASPIRE pick function.}
\label{tab:object-outcomes}\iflatexml
\begin{tabular}{@{}rl*{13}{c}@{}}
\else\\\fi
\toprule
\# & Object & 27B & Flash & luna & $\pi_{0.5}$ & F-WAM & G0.5 & OWAM & Ling & OVLA & ABot & Xiao & XVLA & Pick \\
\midrule
\iflatexml\else
\endfirsthead
\toprule
\# & Object & 27B & Flash & luna & $\pi_{0.5}$ & F-WAM & G0.5 & OWAM & Ling & OVLA & ABot & Xiao & XVLA & Pick \\
\midrule
\endhead
\bottomrule
\endfoot
\fi
1 & alarm\_\allowbreak{}clock & $\checkmark$ & $\times$ & $\times$ & $\times$ & $\times$ & $\times$ & $\times$ & $\times$ & $\times$ & $\times$ & $\times$ & $\times$ & $\times$ \\
2 & apple & $\checkmark$ & $\checkmark$ & $\checkmark$ & $\checkmark$ & $\checkmark$ & $\checkmark$ & $\checkmark$ & $\checkmark$ & $\times$ & $\checkmark$ & $\times$ & $\times$ & $\times$ \\
3 & avocado & $\checkmark$ & $\checkmark$ & $\checkmark$ & $\checkmark$ & $\checkmark$ & $\checkmark$ & $\times$ & $\checkmark$ & $\times$ & $\times$ & $\checkmark$ & $\times$ & $\times$ \\
4 & banana & $\checkmark$ & $\checkmark$ & $\checkmark$ & $\times$ & $\checkmark$ & $\times$ & $\times$ & $\times$ & $\times$ & $\times$ & $\times$ & $\checkmark$ & $\times$ \\
5 & bar\_\allowbreak{}soap & $\checkmark$ & $\times$ & $\times$ & $\checkmark$ & $\checkmark$ & $\checkmark$ & $\checkmark$ & $\times$ & $\times$ & $\times$ & $\times$ & $\times$ & $\times$ \\
6 & baseball & $\times$ & $\checkmark$ & $\times$ & $\checkmark$ & $\checkmark$ & $\checkmark$ & $\checkmark$ & $\checkmark$ & $\checkmark$ & $\checkmark$ & $\checkmark$ & $\checkmark$ & $\times$ \\
7 & battery & $\checkmark$ & $\checkmark$ & $\checkmark$ & $\times$ & $\times$ & $\times$ & $\checkmark$ & $\checkmark$ & $\checkmark$ & $\times$ & $\times$ & $\times$ & $\times$ \\
8 & bell\_\allowbreak{}pepper & $\checkmark$ & $\checkmark$ & $\checkmark$ & $\checkmark$ & $\checkmark$ & $\checkmark$ & $\checkmark$ & $\times$ & $\checkmark$ & $\checkmark$ & $\checkmark$ & $\times$ & $\times$ \\
9 & blackboard\_\allowbreak{}eraser & $\checkmark$ & $\checkmark$ & $\checkmark$ & $\checkmark$ & $\checkmark$ & $\checkmark$ & $\checkmark$ & $\checkmark$ & $\times$ & $\checkmark$ & $\checkmark$ & $\times$ & $\times$ \\
10 & calculator & $\times$ & $\times$ & $\times$ & $\times$ & $\times$ & $\checkmark$ & $\times$ & $\times$ & $\times$ & $\times$ & $\times$ & $\times$ & $\times$ \\
11 & can\_\allowbreak{}of\_\allowbreak{}soda & $\checkmark$ & $\checkmark$ & $\checkmark$ & $\checkmark$ & $\checkmark$ & $\checkmark$ & $\checkmark$ & $\times$ & $\times$ & $\times$ & $\times$ & $\times$ & $\checkmark$ \\
12 & canned\_\allowbreak{}food & $\checkmark$ & $\checkmark$ & $\times$ & $\checkmark$ & $\checkmark$ & $\times$ & $\checkmark$ & $\checkmark$ & $\checkmark$ & $\times$ & $\checkmark$ & $\checkmark$ & $\times$ \\
13 & carrot & $\checkmark$ & $\checkmark$ & $\times$ & $\checkmark$ & $\checkmark$ & $\checkmark$ & $\checkmark$ & $\checkmark$ & $\checkmark$ & $\checkmark$ & $\checkmark$ & $\times$ & $\times$ \\
14 & cell\_\allowbreak{}phone & $\checkmark$ & $\checkmark$ & $\times$ & $\checkmark$ & $\times$ & $\times$ & $\times$ & $\times$ & $\times$ & $\times$ & $\times$ & $\times$ & $\checkmark$ \\
15 & cellulose\_\allowbreak{}tape\_\allowbreak{}dispenser & $\checkmark$ & $\checkmark$ & $\checkmark$ & $\checkmark$ & $\checkmark$ & $\checkmark$ & $\checkmark$ & $\times$ & $\times$ & $\times$ & $\times$ & $\times$ & $\times$ \\
16 & coffee\_\allowbreak{}cup & $\checkmark$ & $\times$ & $\times$ & $\checkmark$ & $\checkmark$ & $\checkmark$ & $\checkmark$ & $\checkmark$ & $\times$ & $\checkmark$ & $\checkmark$ & $\checkmark$ & $\times$ \\
17 & cucumber & $\checkmark$ & $\checkmark$ & $\checkmark$ & $\checkmark$ & $\checkmark$ & $\checkmark$ & $\checkmark$ & $\times$ & $\times$ & $\checkmark$ & $\checkmark$ & $\times$ & $\times$ \\
18 & deodorant\_\allowbreak{}stick & $\checkmark$ & $\checkmark$ & $\checkmark$ & $\checkmark$ & $\checkmark$ & $\checkmark$ & $\times$ & $\checkmark$ & $\times$ & $\times$ & $\times$ & $\times$ & $\checkmark$ \\
19 & dental\_\allowbreak{}floss & $\times$ & $\checkmark$ & $\checkmark$ & $\times$ & $\checkmark$ & $\times$ & $\checkmark$ & $\times$ & $\times$ & $\times$ & $\times$ & $\times$ & $\times$ \\
20 & digital\_\allowbreak{}camera & $\checkmark$ & $\checkmark$ & $\times$ & $\times$ & $\times$ & $\times$ & $\times$ & $\times$ & $\times$ & $\times$ & $\times$ & $\times$ & $\times$ \\
21 & duct\_\allowbreak{}tape & $\checkmark$ & $\checkmark$ & $\checkmark$ & $\times$ & $\times$ & $\times$ & $\times$ & $\times$ & $\times$ & $\times$ & $\times$ & $\times$ & $\checkmark$ \\
22 & easter\_\allowbreak{}egg & $\checkmark$ & $\checkmark$ & $\checkmark$ & $\checkmark$ & $\checkmark$ & $\times$ & $\checkmark$ & $\checkmark$ & $\checkmark$ & $\times$ & $\times$ & $\times$ & $\times$ \\
23 & egg & $\times$ & $\times$ & $\checkmark$ & $\checkmark$ & $\checkmark$ & $\checkmark$ & $\checkmark$ & $\checkmark$ & $\checkmark$ & $\checkmark$ & $\times$ & $\times$ & $\times$ \\
24 & eggplant & $\checkmark$ & $\checkmark$ & $\checkmark$ & $\checkmark$ & $\checkmark$ & $\checkmark$ & $\checkmark$ & $\times$ & $\times$ & $\times$ & $\checkmark$ & $\times$ & $\times$ \\
25 & garlic & $\checkmark$ & $\checkmark$ & $\checkmark$ & $\checkmark$ & $\checkmark$ & $\checkmark$ & $\checkmark$ & $\checkmark$ & $\checkmark$ & $\times$ & $\times$ & $\checkmark$ & $\times$ \\
26 & gaming\_\allowbreak{}controller & $\checkmark$ & $\checkmark$ & $\checkmark$ & $\times$ & $\times$ & $\times$ & $\times$ & $\times$ & $\times$ & $\times$ & $\times$ & $\times$ & $\times$ \\
27 & hockey\_\allowbreak{}puck & $\checkmark$ & $\checkmark$ & $\times$ & $\checkmark$ & $\checkmark$ & $\times$ & $\checkmark$ & $\checkmark$ & $\times$ & $\times$ & $\times$ & $\times$ & $\times$ \\
28 & inhaler & $\checkmark$ & $\checkmark$ & $\times$ & $\times$ & $\times$ & $\checkmark$ & $\checkmark$ & $\checkmark$ & $\checkmark$ & $\times$ & $\times$ & $\times$ & $\times$ \\
29 & lemon & $\checkmark$ & $\checkmark$ & $\checkmark$ & $\checkmark$ & $\checkmark$ & $\checkmark$ & $\checkmark$ & $\times$ & $\checkmark$ & $\checkmark$ & $\times$ & $\times$ & $\times$ \\
30 & lime & $\checkmark$ & $\checkmark$ & $\checkmark$ & $\checkmark$ & $\checkmark$ & $\checkmark$ & $\checkmark$ & $\checkmark$ & $\checkmark$ & $\checkmark$ & $\times$ & $\times$ & $\times$ \\
31 & lipstick & $\checkmark$ & $\checkmark$ & $\checkmark$ & $\checkmark$ & $\checkmark$ & $\checkmark$ & $\times$ & $\times$ & $\checkmark$ & $\times$ & $\times$ & $\times$ & $\times$ \\
32 & mouse & $\checkmark$ & $\checkmark$ & $\checkmark$ & $\checkmark$ & $\checkmark$ & $\checkmark$ & $\times$ & $\times$ & $\times$ & $\checkmark$ & $\times$ & $\times$ & $\times$ \\
33 & mug & $\checkmark$ & $\checkmark$ & $\checkmark$ & $\checkmark$ & $\checkmark$ & $\checkmark$ & $\checkmark$ & $\checkmark$ & $\times$ & $\times$ & $\times$ & $\times$ & $\checkmark$ \\
34 & orange & $\checkmark$ & $\checkmark$ & $\checkmark$ & $\checkmark$ & $\checkmark$ & $\times$ & $\checkmark$ & $\checkmark$ & $\checkmark$ & $\checkmark$ & $\times$ & $\times$ & $\times$ \\
35 & paper\_\allowbreak{}cup & $\checkmark$ & $\checkmark$ & $\times$ & $\checkmark$ & $\times$ & $\checkmark$ & $\checkmark$ & $\times$ & $\times$ & $\times$ & $\times$ & $\checkmark$ & $\times$ \\
36 & peach & $\checkmark$ & $\checkmark$ & $\checkmark$ & $\checkmark$ & $\times$ & $\times$ & $\times$ & $\checkmark$ & $\checkmark$ & $\checkmark$ & $\times$ & $\times$ & $\times$ \\
37 & pear & $\checkmark$ & $\checkmark$ & $\checkmark$ & $\checkmark$ & $\checkmark$ & $\checkmark$ & $\checkmark$ & $\times$ & $\times$ & $\checkmark$ & $\times$ & $\times$ & $\times$ \\
38 & pepper\_\allowbreak{}grinder & $\checkmark$ & $\checkmark$ & $\checkmark$ & $\checkmark$ & $\times$ & $\checkmark$ & $\checkmark$ & $\checkmark$ & $\times$ & $\times$ & $\checkmark$ & $\times$ & $\times$ \\
39 & pill\_\allowbreak{}bottle & $\checkmark$ & $\checkmark$ & $\times$ & $\checkmark$ & $\checkmark$ & $\times$ & $\checkmark$ & $\checkmark$ & $\checkmark$ & $\times$ & $\times$ & $\times$ & $\times$ \\
40 & pillar\_\allowbreak{}candle & $\checkmark$ & $\checkmark$ & $\times$ & $\times$ & $\times$ & $\times$ & $\times$ & $\times$ & $\times$ & $\times$ & $\times$ & $\times$ & $\times$ \\
41 & plum & $\checkmark$ & $\checkmark$ & $\checkmark$ & $\checkmark$ & $\checkmark$ & $\times$ & $\checkmark$ & $\checkmark$ & $\checkmark$ & $\times$ & $\times$ & $\times$ & $\times$ \\
42 & pool\_\allowbreak{}ball & $\checkmark$ & $\checkmark$ & $\checkmark$ & $\checkmark$ & $\checkmark$ & $\checkmark$ & $\checkmark$ & $\checkmark$ & $\checkmark$ & $\checkmark$ & $\times$ & $\times$ & $\times$ \\
43 & portable\_\allowbreak{}game\_\allowbreak{}console & $\times$ & $\checkmark$ & $\times$ & $\times$ & $\times$ & $\checkmark$ & $\times$ & $\checkmark$ & $\times$ & $\times$ & $\times$ & $\times$ & $\times$ \\
44 & potato & $\checkmark$ & $\checkmark$ & $\checkmark$ & $\checkmark$ & $\checkmark$ & $\checkmark$ & $\checkmark$ & $\times$ & $\checkmark$ & $\checkmark$ & $\times$ & $\checkmark$ & $\times$ \\
45 & salt\_\allowbreak{}shaker & $\checkmark$ & $\checkmark$ & $\checkmark$ & $\checkmark$ & $\checkmark$ & $\checkmark$ & $\checkmark$ & $\checkmark$ & $\checkmark$ & $\times$ & $\times$ & $\times$ & $\times$ \\
46 & soda\_\allowbreak{}cup & $\times$ & $\checkmark$ & $\checkmark$ & $\checkmark$ & $\times$ & $\times$ & $\times$ & $\times$ & $\times$ & $\checkmark$ & $\times$ & $\checkmark$ & $\checkmark$ \\
47 & stapler & $\checkmark$ & $\checkmark$ & $\checkmark$ & $\checkmark$ & $\checkmark$ & $\checkmark$ & $\times$ & $\times$ & $\times$ & $\times$ & $\checkmark$ & $\times$ & $\times$ \\
48 & tennis\_\allowbreak{}ball & $\checkmark$ & $\times$ & $\checkmark$ & $\checkmark$ & $\checkmark$ & $\checkmark$ & $\checkmark$ & $\checkmark$ & $\checkmark$ & $\checkmark$ & $\times$ & $\times$ & $\times$ \\
49 & toy\_\allowbreak{}dice & $\checkmark$ & $\checkmark$ & $\checkmark$ & $\times$ & $\checkmark$ & $\times$ & $\checkmark$ & $\checkmark$ & $\checkmark$ & $\times$ & $\times$ & $\times$ & $\times$ \\
50 & wallet & $\checkmark$ & $\checkmark$ & $\times$ & $\checkmark$ & $\times$ & $\checkmark$ & $\times$ & $\times$ & $\times$ & $\times$ & $\times$ & $\times$ & $\times$ \\
51 & artichoke & $\checkmark$ & $\checkmark$ & $\checkmark$ & $\checkmark$ & $\checkmark$ & $\times$ & $\checkmark$ & $\checkmark$ & $\checkmark$ & $\times$ & $\times$ & $\times$ & $\times$ \\
52 & baby\_\allowbreak{}bottle & $\checkmark$ & $\checkmark$ & $\checkmark$ & $\checkmark$ & $\checkmark$ & $\checkmark$ & $\checkmark$ & $\checkmark$ & $\times$ & $\times$ & $\times$ & $\times$ & $\times$ \\
53 & beaker & $\checkmark$ & $\checkmark$ & $\checkmark$ & $\checkmark$ & $\times$ & $\times$ & $\times$ & $\checkmark$ & $\times$ & $\times$ & $\times$ & $\checkmark$ & $\times$ \\
54 & beefsteak\_\allowbreak{}tomato & $\checkmark$ & $\checkmark$ & $\checkmark$ & $\times$ & $\checkmark$ & $\checkmark$ & $\checkmark$ & $\times$ & $\times$ & $\checkmark$ & $\times$ & $\checkmark$ & $\times$ \\
55 & beer\_\allowbreak{}glass & $\checkmark$ & $\checkmark$ & $\checkmark$ & $\checkmark$ & $\times$ & $\times$ & $\checkmark$ & $\times$ & $\times$ & $\times$ & $\checkmark$ & $\times$ & $\times$ \\
56 & beet & $\checkmark$ & $\checkmark$ & $\times$ & $\times$ & $\times$ & $\times$ & $\times$ & $\times$ & $\times$ & $\times$ & $\times$ & $\times$ & $\times$ \\
57 & bottle\_\allowbreak{}of\_\allowbreak{}hot\_\allowbreak{}sauce & $\checkmark$ & $\checkmark$ & $\times$ & $\checkmark$ & $\times$ & $\checkmark$ & $\times$ & $\checkmark$ & $\times$ & $\times$ & $\checkmark$ & $\times$ & $\times$ \\
58 & bowl & $\checkmark$ & $\checkmark$ & $\times$ & $\checkmark$ & $\times$ & $\times$ & $\times$ & $\times$ & $\times$ & $\times$ & $\times$ & $\times$ & $\checkmark$ \\
59 & brussels\_\allowbreak{}sprouts & $\checkmark$ & $\checkmark$ & $\times$ & $\checkmark$ & $\checkmark$ & $\checkmark$ & $\checkmark$ & $\times$ & $\checkmark$ & $\times$ & $\times$ & $\times$ & $\times$ \\
60 & can\_\allowbreak{}of\_\allowbreak{}beans & $\checkmark$ & $\checkmark$ & $\times$ & $\checkmark$ & $\times$ & $\times$ & $\times$ & $\times$ & $\times$ & $\times$ & $\times$ & $\checkmark$ & $\times$ \\
61 & can\_\allowbreak{}of\_\allowbreak{}sardines & $\checkmark$ & $\checkmark$ & $\times$ & $\checkmark$ & $\times$ & $\checkmark$ & $\checkmark$ & $\times$ & $\times$ & $\times$ & $\times$ & $\checkmark$ & $\times$ \\
62 & canteen & $\checkmark$ & $\checkmark$ & $\checkmark$ & $\times$ & $\times$ & $\times$ & $\times$ & $\times$ & $\times$ & $\times$ & $\times$ & $\times$ & $\times$ \\
63 & carton\_\allowbreak{}of\_\allowbreak{}milk & $\checkmark$ & $\checkmark$ & $\checkmark$ & $\checkmark$ & $\checkmark$ & $\times$ & $\times$ & $\checkmark$ & $\times$ & $\times$ & $\times$ & $\times$ & $\times$ \\
64 & fried\_\allowbreak{}chicken & $\checkmark$ & $\checkmark$ & $\checkmark$ & $\times$ & $\times$ & $\times$ & $\times$ & $\times$ & $\times$ & $\times$ & $\times$ & $\times$ & $\times$ \\
65 & cinnamon\_\allowbreak{}roll & $\checkmark$ & $\checkmark$ & $\checkmark$ & $\checkmark$ & $\checkmark$ & $\checkmark$ & $\checkmark$ & $\checkmark$ & $\checkmark$ & $\times$ & $\times$ & $\times$ & $\times$ \\
66 & marker & $\times$ & $\checkmark$ & $\times$ & $\checkmark$ & $\checkmark$ & $\checkmark$ & $\checkmark$ & $\times$ & $\checkmark$ & $\times$ & $\times$ & $\times$ & $\times$ \\
67 & conch & $\checkmark$ & $\checkmark$ & $\checkmark$ & $\times$ & $\times$ & $\times$ & $\times$ & $\times$ & $\times$ & $\times$ & $\times$ & $\times$ & $\times$ \\
68 & corkscrew & $\checkmark$ & $\checkmark$ & $\checkmark$ & $\times$ & $\checkmark$ & $\checkmark$ & $\times$ & $\checkmark$ & $\times$ & $\times$ & $\checkmark$ & $\times$ & $\times$ \\
69 & gravy\_\allowbreak{}boat & $\checkmark$ & $\checkmark$ & $\checkmark$ & $\times$ & $\times$ & $\times$ & $\times$ & $\times$ & $\times$ & $\times$ & $\times$ & $\times$ & $\times$ \\
70 & eyeglasses & $\checkmark$ & $\times$ & $\times$ & $\times$ & $\times$ & $\times$ & $\times$ & $\times$ & $\times$ & $\times$ & $\times$ & $\times$ & $\checkmark$ \\
71 & french\_\allowbreak{}press & $\checkmark$ & $\times$ & $\checkmark$ & $\times$ & $\times$ & $\times$ & $\times$ & $\times$ & $\times$ & $\times$ & $\times$ & $\times$ & $\times$ \\
72 & gift\_\allowbreak{}box & $\times$ & $\checkmark$ & $\checkmark$ & $\times$ & $\times$ & $\times$ & $\times$ & $\times$ & $\times$ & $\times$ & $\times$ & $\times$ & $\checkmark$ \\
73 & glue\_\allowbreak{}stick & $\times$ & $\checkmark$ & $\checkmark$ & $\times$ & $\checkmark$ & $\checkmark$ & $\checkmark$ & $\checkmark$ & $\checkmark$ & $\checkmark$ & $\checkmark$ & $\times$ & $\times$ \\
74 & grapefruit & $\checkmark$ & $\checkmark$ & $\checkmark$ & $\times$ & $\times$ & $\times$ & $\times$ & $\times$ & $\times$ & $\times$ & $\times$ & $\times$ & $\times$ \\
75 & hard\_\allowbreak{}drive & $\checkmark$ & $\checkmark$ & $\checkmark$ & $\checkmark$ & $\checkmark$ & $\checkmark$ & $\times$ & $\times$ & $\times$ & $\times$ & $\times$ & $\times$ & $\times$ \\
76 & hotdog & $\checkmark$ & $\checkmark$ & $\checkmark$ & $\checkmark$ & $\times$ & $\checkmark$ & $\times$ & $\checkmark$ & $\times$ & $\times$ & $\times$ & $\times$ & $\times$ \\
77 & spray\_\allowbreak{}can & $\checkmark$ & $\times$ & $\checkmark$ & $\checkmark$ & $\times$ & $\checkmark$ & $\times$ & $\times$ & $\times$ & $\times$ & $\times$ & $\times$ & $\times$ \\
78 & jade\_\allowbreak{}roller & $\times$ & $\times$ & $\checkmark$ & $\times$ & $\checkmark$ & $\checkmark$ & $\checkmark$ & $\checkmark$ & $\times$ & $\times$ & $\times$ & $\times$ & $\times$ \\
79 & jar\_\allowbreak{}of\_\allowbreak{}jam & $\checkmark$ & $\checkmark$ & $\checkmark$ & $\checkmark$ & $\checkmark$ & $\times$ & $\times$ & $\checkmark$ & $\times$ & $\checkmark$ & $\checkmark$ & $\checkmark$ & $\times$ \\
80 & jigger & $\checkmark$ & $\checkmark$ & $\checkmark$ & $\checkmark$ & $\checkmark$ & $\times$ & $\times$ & $\checkmark$ & $\times$ & $\times$ & $\times$ & $\times$ & $\times$ \\
81 & light\_\allowbreak{}bulb & $\checkmark$ & $\checkmark$ & $\checkmark$ & $\times$ & $\checkmark$ & $\checkmark$ & $\checkmark$ & $\checkmark$ & $\times$ & $\times$ & $\times$ & $\times$ & $\times$ \\
82 & lighter & $\checkmark$ & $\checkmark$ & $\checkmark$ & $\times$ & $\times$ & $\checkmark$ & $\times$ & $\times$ & $\times$ & $\times$ & $\times$ & $\times$ & $\times$ \\
83 & mango & $\checkmark$ & $\checkmark$ & $\checkmark$ & $\checkmark$ & $\checkmark$ & $\checkmark$ & $\checkmark$ & $\times$ & $\checkmark$ & $\checkmark$ & $\times$ & $\times$ & $\times$ \\
84 & match\_\allowbreak{}box & $\checkmark$ & $\checkmark$ & $\times$ & $\checkmark$ & $\checkmark$ & $\checkmark$ & $\checkmark$ & $\checkmark$ & $\checkmark$ & $\checkmark$ & $\checkmark$ & $\times$ & $\times$ \\
85 & mousetrap & $\checkmark$ & $\checkmark$ & $\checkmark$ & $\checkmark$ & $\checkmark$ & $\checkmark$ & $\times$ & $\checkmark$ & $\checkmark$ & $\times$ & $\times$ & $\times$ & $\times$ \\
86 & pencil\_\allowbreak{}case & $\checkmark$ & $\checkmark$ & $\checkmark$ & $\checkmark$ & $\times$ & $\checkmark$ & $\times$ & $\times$ & $\times$ & $\times$ & $\times$ & $\times$ & $\times$ \\
87 & plier & $\times$ & $\checkmark$ & $\times$ & $\times$ & $\checkmark$ & $\checkmark$ & $\checkmark$ & $\times$ & $\times$ & $\times$ & $\times$ & $\times$ & $\times$ \\
88 & pomegranate & $\checkmark$ & $\times$ & $\checkmark$ & $\checkmark$ & $\checkmark$ & $\checkmark$ & $\checkmark$ & $\checkmark$ & $\times$ & $\checkmark$ & $\times$ & $\checkmark$ & $\times$ \\
89 & razor & $\times$ & $\times$ & $\times$ & $\times$ & $\checkmark$ & $\checkmark$ & $\times$ & $\checkmark$ & $\times$ & $\times$ & $\times$ & $\times$ & $\times$ \\
90 & saucepan & $\checkmark$ & $\times$ & $\times$ & $\times$ & $\times$ & $\times$ & $\times$ & $\times$ & $\times$ & $\times$ & $\times$ & $\times$ & $\times$ \\
91 & screwdriver & $\checkmark$ & $\checkmark$ & $\checkmark$ & $\checkmark$ & $\checkmark$ & $\checkmark$ & $\checkmark$ & $\times$ & $\checkmark$ & $\times$ & $\checkmark$ & $\times$ & $\times$ \\
92 & scrub\_\allowbreak{}brush & $\checkmark$ & $\checkmark$ & $\checkmark$ & $\checkmark$ & $\checkmark$ & $\checkmark$ & $\checkmark$ & $\times$ & $\checkmark$ & $\checkmark$ & $\checkmark$ & $\times$ & $\times$ \\
93 & snow\_\allowbreak{}globe & $\checkmark$ & $\checkmark$ & $\times$ & $\checkmark$ & $\times$ & $\times$ & $\times$ & $\checkmark$ & $\times$ & $\checkmark$ & $\times$ & $\times$ & $\times$ \\
94 & teacup & $\times$ & $\checkmark$ & $\times$ & $\checkmark$ & $\times$ & $\times$ & $\checkmark$ & $\times$ & $\times$ & $\checkmark$ & $\checkmark$ & $\checkmark$ & $\times$ \\
95 & tube\_\allowbreak{}of\_\allowbreak{}toothpaste & $\checkmark$ & $\checkmark$ & $\checkmark$ & $\times$ & $\checkmark$ & $\checkmark$ & $\checkmark$ & $\checkmark$ & $\checkmark$ & $\checkmark$ & $\checkmark$ & $\times$ & $\checkmark$ \\
96 & vegetable\_\allowbreak{}peeler & $\checkmark$ & $\checkmark$ & $\checkmark$ & $\checkmark$ & $\times$ & $\checkmark$ & $\checkmark$ & $\times$ & $\checkmark$ & $\checkmark$ & $\checkmark$ & $\times$ & $\times$ \\
97 & water\_\allowbreak{}glass & $\times$ & $\times$ & $\checkmark$ & $\checkmark$ & $\checkmark$ & $\times$ & $\times$ & $\checkmark$ & $\times$ & $\times$ & $\times$ & $\times$ & $\times$ \\
98 & toy\_\allowbreak{}figure & $\checkmark$ & $\checkmark$ & $\checkmark$ & $\times$ & $\times$ & $\times$ & $\times$ & $\times$ & $\times$ & $\times$ & $\times$ & $\times$ & $\times$ \\
99 & spirit\_\allowbreak{}lamp & $\checkmark$ & $\checkmark$ & $\times$ & $\checkmark$ & $\checkmark$ & $\times$ & $\checkmark$ & $\checkmark$ & $\checkmark$ & $\times$ & $\times$ & $\checkmark$ & $\times$ \\
100 & zucchini & $\checkmark$ & $\checkmark$ & $\times$ & $\checkmark$ & $\checkmark$ & $\checkmark$ & $\checkmark$ & $\times$ & $\times$ & $\checkmark$ & $\times$ & $\times$ & $\times$ \\
\iflatexml
\bottomrule
\end{tabular}
\end{table}
\else
\end{longtable}
\endgroup
\fi

\end{document}